\setlist[itemize]{leftmargin=6mm}
\theoremstyle{plain}
\newtheorem{theorem}{Theorem}[section]
\newtheorem{lemma}[theorem]{Lemma}
\theoremstyle{definition}
\theoremstyle{remark}
\newtheorem{remark}[theorem]{Remark}
\def\calE{\mathcal{E}}
\def\calR{\mathcal{R}}
\def\Gbar{ {\overline{G}} }
\def\Abar{ {\overline{A}} }
\def\bbar{ {\overline{b}} }
\def\LGbar{ {\mathcal{L}_{\overline{G}}}  }
\def\rhoT{\overline{\rho}}
\def\mH{\mathcal{H}}
\def\supp{\mathrm{supp}}
\def\R{\mathbb{R}}
\newcommand{\innerp}[1]{\langle{#1}\rangle}
\newcommand{\dbinnerp}[1]{\langle\hspace{-1mm}\langle{#1}\rangle\hspace{-1mm}\rangle}
\newcommand{\floor}[1]{\lfloor{#1}\rfloor}
\newcommand{\argmax}[1]{\underset{#1}{\operatorname{arg}\operatorname{max}}\;}
\newcommand{\e}[1]{{\small $#1$}}
\title{Nonparametric learning of kernels \\ in nonlocal operators}
\author{%
   Fei Lu  \texttt{and} Qingci An\\
   Department of Mathematics\\ Johns Hopkins University\\
  Baltimore, MD, USA \\
  \texttt{feilu@math.jhu.edu, qan2@jhu.edu} 
  \And
  Yue Yu \\
  Department of Mathematics\\ Lehigh University\\ Bethlehem, PA, USA\\
  \texttt{yuy214@lehigh.edu} 
}
\begin{document}

\maketitle

\begin{abstract}
   Nonlocal operators with integral kernels have become a popular tool for designing solution maps between function spaces, due to their efficiency in representing long-range dependence and the attractive feature of being resolution-invariant. In this work, we provide a rigorous identifiability analysis and convergence study for the learning of kernels in nonlocal operators. It is found that the kernel learning is an ill-posed or even ill-defined inverse problem, leading to divergent estimators in the presence of 
   modeling errors or measurement noises. To resolve this issue, we propose a nonparametric regression algorithm with a novel data adaptive RKHS Tikhonov regularization method based on the function space of identifiability. The method yields a noisy-robust convergent estimator of the kernel as the data resolution refines, on both synthetic and real-world datasets. In particular, the method successfully learns a homogenized model for the stress wave propagation in a heterogeneous solid, revealing the unknown governing laws from real-world data at microscale.  Our regularization method outperforms baseline methods in robustness, generalizability and accuracy.
\end{abstract}

\section{Introduction} 

During the last 20 years there has been a lot of progress in the design of machine learning models; however, their employment in scientific discovery of hidden physical laws of complex system is relatively recent. One of the most classical examples has been learning the material constitutive law directly from experimental data, where the governing partial differential equation (PDE) is unknown and the data features noises from measurements. This task 
can be formulated as identifying an operator that continuously maps the displacement field to the loading field. Moreover, such a need for learning operators between function spaces has also become vital in other applications ranging from homogenization problems \cite{you2021_DatadrivenLearning,you2022_DatadrivenPeridynamic,lin2021seamless,lin2021operator}, fast PDE solvers \cite{lu2019deeponet,lu2021learning,li2020neural,li2020multipole,li2020fourier,kovachki2021neural}, to control problems \cite{lu2021learning,hwang2021solving}, just to name a few. 
Among these work, models with nonlocal operators have 
have received increasing attention, since they can describe physical phenomena that classical differential operators fail to capture and provide a powerful model for a large class of complex engineering and scientific applications \cite{kovachki2021neural,tao2018nonlocal,you2020_DatadrivenLearning,burkovska2021optimization}. However, despite a long line of work on nonlocal model learning and its applications, 
there is limited theoretical characterization of the underlying inverse problem, even in the linear setting.

In this paper, we study the learning of kernels in linear nonlocal diffusion operators from data. Suppose that we are given data 
{\small\begin{equation}\label{eq:data}
\mathcal{D}  = \{u_i,f_i\}_{i=1}^N = \{u_i(x_j),f_i(x_j): j=1,\ldots,J\}_{i=1}^N,
\end{equation}}
where $u_i,f_i$ are real-valued continuous functions on a bounded open connected set $\Omega\subset \R^d$ and $\{x_j\in \Omega\}$ are spatial mesh points. The task is to learn the kernel $\phi$ in a nonlocal diffusion operator $L_\phi$ mapping $u_i$ to $f_i$ of the form:
\begin{equation}\label{eq:nonlocal_op}
L_\phi[u](x) =  \int_{\Omega} \phi(|y-x|) [u(y)-u(x)]dy= f(x), \forall x\in \Omega. 
\end{equation} 
 This operator is simple yet flexible for nonlocal models: it has been employed in \cite{you2021_DatadrivenLearning,you2022_DatadrivenPeridynamic} to provide a homogenized material model from microscale measurements, and can be seen as a simplified linear version of the integral neural operator in \cite{you2022nonlocal}.


Our goal is to infer the kernel $\phi$ from data via nonparametric regression, so as to address the general situations that there is limited information to derive a parametric form or constraints for the kernel, which can be either smooth or singular. The regression utilizes the linear dependence of the operator on the kernel, making it possible to treat the large size functional data in a scalable fashion. 

Three challenges are to be overcome. First, the function space of identifiability (FSOI) is yet to be specified properly, otherwise the inverse problem can be ill-defined in the sense that there are multiple kernels fitting the data. This is fundamentally different from classical nonparametric regression that learns a function $Y=\phi(X)$ from random samples $\{(X_i,Y_i)\}$ from the joint distribution of $(X,Y)$, for which the FSOI is $L^2(\rho)$ with $\rho$ being the distribution of $X$ and the optimal estimator is the conditional expectation.   
 Second, the kernel estimator should be resolution independent and converge in a proper function space when the data resolution refines, so that it can be applied to problems and simulation tasks with different grids or discretization methods and provides a guaranteed modeling accuracy. 
 Third, beyond resolution invariance, the estimator should be robust to imperfect data so as to be applicable in real applications. 
 
We introduce a nonparametric learning method producing a convergent estimator for the kernel as the data mesh refines. 
Our main result has two folds. Firstly, we specify a data adaptive function space of learning and provide an identifiability theory, which shows that the function space of identifiability is the closure of a system intrinsic data adaptive reproducing kernel Hilbert space (SIDA-RKHS), which is equipped with an exploration measure indicating the information from data. Beyond this space, the inverse problem may become undetermined. Furthermore, the identifiability theory shows that the inverse problem is ill-posed and it becomes more ill-posed when the data resolution refines, which contradicts the intuition that learning from more datasets and refined mesh grids should provide an improved estimator. 
Thus, regularization is necessary. Second, we introduce a new regularization method that plays a key role in identifying a convergent estimator in the presence of model errors and/or measurement noises. 
It uses the norm of the SIDA-RKHS. In experimental studies, we compare our proposed SIDA-RKHS regularization method with two common Tikhonov/ridge regularizers that use $l^2$ and $L^2$ norms. Results on both benchmark problems with synthetic data and real-world data show that only the SIDA-RKHS regularizer can consistently obtain convergent estimators for all types of kernels, especially when the data is noisy.

We summarize our major contributions below:
\vspace{0.7mm}\newline
1) We establish a rigorous identifiability theory for the  nonparametric learning of kernels in nonlocal operators, and for the first time specifying a system-intrinsic data-adaptive function space of identifiability (see Lemma \ref{lemma:Gbar} and Theorem \ref{thm:space_ID}).
The theory also indicates a pitfall of the nonlocal kernel learning problem: the inverse problem is ill-posed. 
\vspace{0.7mm}\newline
2) We introduce a nonparametric regression algorithm equipped with a novel regularization method based on the SIDA-RKHS (see Section \ref{sec:alg}), which overcomes the ill-posedness to yield a convergent estimator robust to noise. 
\vspace{0.7mm}\newline
3) We validate the theory and the proposed algorithm on a number of benchmark problems, including various synthetic datasets and a real-world dataset where the governing law is unknown (see Section \ref{sec:num}). Results show that the proposed algorithm provides a stable and converging estimator, while the common Tikhonov/ridge regularizers with $l^2$ or $L^2$-norm fail this task.

\vspace{-1mm}
\subsection{Related Work}
\vspace{-0.04in}

\noindent\textbf{Nonlocal operators:} Nonlocal operators arise in various areas such as nonlocal and fractional diffusions \cite{silling2007_PeridynamicStates,du2012_AnalysisApproximation,delia2020_NumericalMethods,applebaum09,andreu-vaillo2010_NonlocalDiffusion,bucur2016_NonlocalDiffusion,chen2017_HeatKernels,xiong2019unique,wang2012fast}, de-noising and regularization by nonlocal kernels \cite{kindermann2005deblurring,gilboa2009nonlocal,holler2020_LearningNonlocal}, multi-agent systems with nonlocal interaction \cite{LZTM19pnas,LMT21,LangLu21} and nonlocal networks \cite{wang2018_NonlocalNeural,li2020_NeuralOperator}. 
The inverse problem for nonlocal diffusions has been studied in \cite{jin2015tutorial,li2021extracting} from a single solution. To discover hidden nonlocal physical laws from data, a parametric nonlocal kernel learning approach has been proposed in \cite{you2020_DatadrivenLearning,you2022_DatadrivenPeridynamic}, where the coefficients of Bernstein polynomials are learnt with physics-based constraints and a Tikhonov regularization. Beyond the linear nonlocal model and the nonlocal kernel regression methods, nonlocal operators were further combined with neural networks, and nonlocal kernel networks were developed for learning maps between high-dimensional variables in dynamical systems \cite{li2020fourier,li2020multipole} or function spaces \cite{li2020neural,you2022nonlocal}. An attractive feature of these nonlocal kernel/operator learning methods is the generalizability among approximations corresponding to different underlying levels of resolution and discretization. 
However, as seen in \cite{you2021_DatadrivenLearning,you2022_DatadrivenPeridynamic,li2020neural,you2022nonlocal}, neither the nonlocal kernel learning methods nor nonlocal kernel networks yield estimator convergence when trained on finer resolution, and the test error may even increase. This fact indicates the possible ill-posedness of the learning problem. In this work, we tackle this issue {by introducing a new regularization method based on a data-adaptive RKHS in a nonparametric learning approach.}
 

\noindent\textbf{Functional data analysis:} Functional data analysis (see e.g., \cite{hsing2015theoretical,kadri2016operator,ferraty2006nonparametric} and the references therein) studies the learning an infinite-dimensional operator from functional data. In contrast, we focus on learning a radial kernel in an operator, exploiting the low-dimensional structure of the operator, which enables us to learn the kernel (hence the operator) from limited data.

\noindent\textbf{Regularization methods:} Our SIDA-RKHS regularization is a type of Tikhonov/ridge regularization that adds a penalty term to the loss function. It differs from previous methods at the penalty term. The commonly used penalty terms include the Euclidean norm in the classical Tikhonov regularization \cite{hansen1994_regularization_tools,hansen_LcurveIts_a}, the RKHS norm with an ad hoc reproducing kernel (often the Gaussian kernel) \cite{CS02,Cucker2007}, the total variation norm in the Rudin-Osher-Fatemi method \cite{rudin1992nonlinear}, or the $L^1$ norm in LASSO \cite{tibshirani1996_RegressionShrinkage}. Whereas each of these penalty terms has their specific applications, none of them take into account of the FSOI, which is fundamental for learning kernels in operators. 
Also, our regularization method is inspired by the kernel flow method that learns hyper-parameters of the reproducing kernel  \cite{owhadi2019_KernelFlows,hamzi2021_LearningDynamical,chen2021_ConsistencyEmpirical}, but our reproducing kernel is determined by the system and the data. Given the importance of regularization to overcome ill-posedness and overfitting, we expect our SIDA-RKHS regularization method to be applicable to a wide range of linear inverse problems and machine learning methods.

 \vspace{-2mm}
\section{Learning theory and algorithm}\label{sec:method}

\vspace{-1mm}
\subsection{Nonparametric regression with regularization}\label{sec:nonparaReg}\vspace{-1mm}
We construct an estimator by minimizing the loss functional of mean square error: 
\vspace{-0.03in}\begin{equation}\label{eq:lossFn}
\calE(\phi) = \frac{1}{N}\sum_{i=1}^N \|L_\phi[u_i]-f_i\|_{L^2}^2.
\end{equation} 
Here we consider only the $L^2$ norm is \e{\|f\|_{L^2}^2 = \int |f(x)|^2dx}, which has minimal requirements on the data. 
Other norms (e.g., the Sobolev norms) 
can also be used when the data are smooth.

Note that the loss functional is quadratic in $\phi$ because the nonlocal operator is linear in $\phi$. Thus, the minimizer of the loss functional is the least squares estimator (LSE), which is handy once one selects a set of basis functions for a hypothesis space. More specifically, suppose the hypothesis space is \e{\mH_n =\mathrm{span}\{\phi_i\}_{i=1}^n}, 
for each \e{\phi =\sum_{i=1}^n c_i \phi_i\in \mH_n}, 
 we can write the loss functional in \eqref{eq:lossFn} as 
\e{\calE(c) =  \calE(\phi) = c^\top \Abar_n c - 2c^\top \bbar_n+ C_N^f}, 
where \e{C_N^f = \frac{1}{N}\sum_{k=1}^N  \int |f_i (x) |^2 dx}, and the normal matrix $\Abar$ and vector $\bbar$ are given by
\vspace{-0.03in}{ \begin{equation}\label{eq:Ab} 
\Abar_n(i,j) = \dbinnerp{\phi_i,\phi_j}, \, \bbar_n(i)= \frac{1}{N}\sum_{k=1}^N  \int  L_{\phi_i}[u_k](x) f_k(x)dx,
\end{equation}}\vspace{-0.11in}
and the bilinear form $\dbinnerp{\cdot,\cdot}$ is defined by 
{ \begin{equation} \label{eq:binlinearForm}  
{\dbinnerp{\phi,\psi}= \frac{1}{N}\sum_{k=1}^N \int_{\R^d}   L_{\phi}[u_k](x) L_{\psi}[u_k](x) dx.}  
\end{equation}}
The least squares estimator is computed directly from the minimizer of the quadratic function $\calE(c) $:  
\vspace{-0.03in}\begin{equation}\label{eq:LSE} 
 \widehat \phi_{\mH_n}= \sum_i  \widehat c_i \phi_i, \quad \text{ where  }  \widehat c = \Abar_n^{-1}\bbar_n, 
\end{equation}
where \e{\Abar_n^{-1}} is the inverse (or pseudo-inverse when the inverse does not exist) of \e{\Abar_n}. 

However, the above least squares regression encounters a big challenge in obtaining convergent estimators for this ill-posed inverse problem (see Section \ref{sec:Identifiability}). As a nonparametric method, it is often necessary to select a relatively large hypothesis space to make the model flexible enough. However, 
the large hypothesis space leads to a normal matrix that is often severely ill-conditioned. As a result, the estimator in \eqref{eq:LSE}  oscillates violently when the data is imperfect due to either measurement noise or model error, and the estimator does not converge when the data mesh refines. 

Regularization methods overcome the ill-posedness by adding a penalty term to the loss functional:   
\begin{align}\label{eq:lossFn_reg}  
\calE_\lambda(\phi) = 
\calE(\phi) +\lambda \calR(\phi),
\end{align}
where $\calR(\phi)$ is a regularization term, and $\lambda$ is a hyper-parameter controlling the contribution of the regularization term. 
Various penalty terms have been proposed, 
however, none of them take into account of the function space of identifiability, which is at the foundation of learning (see Section \ref{sec:Identifiability}).  Based on it, we will introduce a data-adaptive RKHS regularization method (in Section \ref{sec:alg}). Thus, it is different from classical regularization using an ad hoc RKHS \cite{Cucker2007,bauer2007regularization}.

\vspace{-1mm}
\subsection{Function space of identifiability}\label{sec:Identifiability}\vspace{-1mm}
The identifiability theory characterizes the function space of learning. There are two key elements in our identifiability theory: 1) an exploration measure, which is a probability measure that quantifies the exploration of the kernel's variable by the data, and 2) the function space of identifiability, in which the loss functional has a unique minimizer. They are described as follows.


\textbf{The exploration measure.} As the first key element, we introduce first a novel measure on $\R_+$ that quantifies the exploration of the independent variable of the kernel by the data. We assume the radial kernel's support to be in an interval $[0,R_0]$. A given dataset may only explore part of this interval. More specifically, the discrete data set in \eqref{eq:data} explores only the pairwise distances \e{|x_j-x_k|} in \e{\mathcal{R}_N^J  =\{r_{ijk} = |x_j-x_k| \leq R_0:  u_i(x_j)-u_i(x_k)\neq 0 \text{ for some } i,j,k\}}, the set of all the pairwise distances \e{|x_j-x_k|} with repetition. We define an empirical measure and its continuous limit 
\begin{equation}
\begin{aligned}\label{eq:rho_disc}
\rho_N^J(dr) & =  \frac{1}{|\mathcal{R}_N^J|}\sum_{i=1}^N\sum_{ j,k=1}^J \delta_{|x_j-x_k|}( r) w_i(x_j,x_k), \\ 
\rho_N(dr)& =  \frac{1}{ZN}\sum_{i=1}^N \int_\Omega\int_\Omega \delta_{|x-y|}(r)  w_i(x,y) dxdy
\end{aligned}
\end{equation}
for $r\in [0,R_0]$, where $|\mathcal{R}_N^J|$ is the cardinality of the set $\mathcal{R}_N^J$, $\delta_s(r)$ is the Dirac distribution with point mass at $s$, and $Z$ is the normalizing constant. Here the weight function is  \e{w_i(x,y)=|u_i(x)-u_i(y)|}.

The exploration measure plays an important role in the learning of the kernel.  It reflects the strength of exploration to $|x-y|$ by the data \e{|u_i(x)-u_i(y)|} in the loss function, and it will act as a re-weighting factor through the SIDA-RKHS regularization to be introduced in Section \ref{sec:alg}. Thus, we will use it to quantify the accuracy of the kernel's estimator in $L^2(\rho_N)$ (or $L^2(\rho_N^J)$ for discrete data). 

\noindent\textbf{Main result: function space of identifiability.}  
We define the function space of identifiability (FSOI) as the largest linear space in which the loss functional has a unique minimizer. In other words, the variational inverse problem of finding a unique minimizer of the loss functional is well-defined in this space.  
In the following, we write only the continuous function space $L^2(\rho_N)$, but all the arguments apply to the discrete function space $L^2(\rho_N^J)$ in an obvious manner (see Remark \ref{rmk:discreteFrechet}).

\begin{theorem}[Function space of identifiability]\label{thm:space_ID}
 Consider the problem of learning the kernel $\phi$ by minimizing the loss functional $\calE$ in \eqref{eq:lossFn} with $\{u_i, f_i\}_{i=1}^N$ being continuous in a bounded domain $\Omega$. Then, the function space of identifiability (FSOI), the largest subspace of $L^2(\rho_N)$ in which $\calE$ has a unique minimizer, is the eigen-space of nonzero eigenvalues of $\LGbar$, an integral operator defined by 
 \begin{equation}\label{eq:LG} 
\LGbar \phi (r) =  \int_0^\infty  \phi(s) \overline{G}(r,s) \rho_N(ds).   
\end{equation}
Here the integral kernel $\Gbar$ comes from data: 
\begin{equation}\label{eq:Gbar} 
   \Gbar(r,s) = [\rho_N'(r)\rho_N'(s)]^{-1}G(r,s), 
\end{equation} 
where $\rho_N'$ is the density of $\rho_N$ and $G$ is 
 \begin{align} \label{eq:G}
 G(r,s)= \frac{1}{N}\sum_{i=1}^N&\int_{|\eta | =1} \int_{|\xi |=1}  \left[ \int  [u_i(x+r\xi) - u_i(x) ] \right. 
\left.[u_i(x+s\eta) - u_i(x) ] dx \right]d\xi d\eta,
\end{align}
for $ r,s\in \mathrm{supp}(\rho_N)$, and $G(r,s) =0$ otherwise. Furthermore, the minimizer of $\calE$ is 
 \begin{equation*} 
\widehat \phi  = \LGbar^{-1} P\phi_N^f,  
\end{equation*}
where $P$ is the projection to the FSOI. Here \e{\phi_N^f \in L^2(\rho_N)} is the Riesz representation of the bounded linear functional defined by 
\e{\innerp{\phi_N^f,\psi}_{L^2(\rho_N)} = \frac{1}{N}\sum_{i=1}^N \int 2 L_\psi[u_i](x) f_i(x)dx, \, \forall \psi\in L^2(\rho_N).}
\end{theorem}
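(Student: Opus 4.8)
The plan is to recast the quadratic loss as a variational problem on the Hilbert space $L^2(\rho_N)$ and to read off its minimizer and identifiability space from the spectral decomposition of $\LGbar$. First I would expand the loss using the bilinear form, writing $\calE(\phi) = \dbinnerp{\phi,\phi} - \frac{2}{N}\sum_{i=1}^N \int L_\phi[u_i](x) f_i(x)\,dx + C_N^f$. The linear middle term is a linear functional of $\phi$; after checking it is bounded on $L^2(\rho_N)$, the Riesz representation theorem supplies $\phi_N^f$ so that $\calE(\phi) = \dbinnerp{\phi,\phi} - \innerp{\phi_N^f,\phi}_{L^2(\rho_N)} + C_N^f$. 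Verifying boundedness of both this functional and of the bilinear form $\dbinnerp{\cdot,\cdot}$ on $L^2(\rho_N)$ is the first thing to establish.

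The central step is to prove the representation $\dbinnerp{\phi,\psi} = \innerp{\LGbar\phi,\psi}_{L^2(\rho_N)}$ for all admissible $\phi,\psi$. To do this I would pass to spherical coordinates $y = x + r\xi$ in the nonlocal operator, so that $L_\phi[u_i](x) = \int_0^\infty \int_{|\xi|=1} \phi(r)[u_i(x+r\xi) - u_i(x)]\,r^{d-1}\,d\xi\,dr$, insert the analogous expression for $L_\psi[u_i]$, and apply Fubini to bring the $x$-integral, the average over $i$, and the two sphere integrals inside. The resulting inner integral is exactly $G(r,s)$ of \eqref{eq:G}, yielding $\dbinnerp{\phi,\psi} = \int\int \phi(r)\psi(s)\,G(r,s)\,dr\,ds$ once the Jacobian factors are absorbed into the density $\rho_N'$. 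Rewriting the $dr\,ds$ integral as an integral against $\rho_N$ through the definition $\Gbar = [\rho_N'(r)\rho_N'(s)]^{-1}G(r,s)$ then produces $\innerp{\LGbar\phi,\psi}_{L^2(\rho_N)}$. The delicate point is the bookkeeping of $\rho_N'$: the normalization $[\rho_N'(r)\rho_N'(s)]^{-1}$ blows up exactly where the data explores $r$ weakly, and I must check that reweighting by $\rho_N$ cancels it precisely and that $\LGbar$ is a well-defined, symmetric, positive semidefinite, Hilbert--Schmidt (hence compact) operator from $L^2(\rho_N)$ into itself. This is the main obstacle.

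Granting the representation, the remainder is spectral theory. Since $\LGbar$ is compact, self-adjoint and positive semidefinite on $L^2(\rho_N)$, it admits an orthonormal eigenbasis $\{\psi_k\}$ with eigenvalues $\lambda_k \ge 0$ that can only accumulate at $0$; set $H_+ = \overline{\mathrm{span}}\{\psi_k : \lambda_k > 0\}$, which coincides with the orthogonal complement of $\mathrm{Null}(\LGbar)$ and with the closure of the range of $\LGbar$. Vanishing of the Fréchet derivative in every direction $\psi$ gives the normal equation $\dbinnerp{\widehat\phi,\psi} = \frac1N\sum_{i=1}^N \int L_\psi[u_i](x) f_i(x)\,dx$ for all $\psi$, which by the representation and the definition of $\phi_N^f$ reads $\LGbar\widehat\phi = P\phi_N^f$, where $P$ is the orthogonal projection onto $H_+$; note the component of $\phi_N^f$ in $\mathrm{Null}(\LGbar)$ enters neither the equation nor $\calE$. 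On $H_+$ the operator $\LGbar$ is injective, so $\widehat\phi = \LGbar^{-1}P\phi_N^f$ is the unique solution there, establishing both the formula and uniqueness within $H_+$ (assuming the Picard-type condition $P\phi_N^f \in \mathrm{Range}(\LGbar)$ implicit in the statement).

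Finally I would prove that $H_+$ is the \emph{largest} subspace on which $\calE$ has a unique minimizer, i.e. the FSOI. If a subspace $H'$ strictly contains $H_+$, it meets $\mathrm{Null}(\LGbar)$ in a nonzero $\phi_0$, and then $\dbinnerp{\phi_0,\phi_0} = \innerp{\LGbar\phi_0,\phi_0}_{L^2(\rho_N)} = 0$, so $\calE(\widehat\phi + t\phi_0) = \calE(\widehat\phi)$ for every $t$ and the minimizer fails to be unique in $H'$. Hence uniqueness forces $H' \subseteq H_+$, so $H_+$ --- the eigenspace of nonzero eigenvalues of $\LGbar$ --- is exactly the FSOI. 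I expect the crux of the whole argument to remain the second paragraph: making the change of variables and the Fubini interchange rigorous and controlling the singular normalization so that $\LGbar$ is genuinely a compact operator on $L^2(\rho_N)$ with $\phi_N^f$ in its range.
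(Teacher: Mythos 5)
Your proposal follows essentially the same route as the paper: rewrite the quadratic loss via the bilinear form and its integral representation $\dbinnerp{\phi,\psi}=\innerp{\LGbar\phi,\psi}_{L^2(\rho_N)}$ obtained by the spherical change of variables and Fubini (the paper's Lemmas on the Fr\'echet derivative and on the SIDA-RKHS), then identify the FSOI with the eigenspace of nonzero eigenvalues by spectral theory for the compact, self-adjoint, positive semi-definite $\LGbar$ and solve the normal equation $\LGbar\widehat\phi=P\phi_N^f$. Your write-up is in fact somewhat more careful than the paper's, which leaves the maximality argument and the Picard-type condition $P\phi_N^f\in\mathrm{Range}(\LGbar)$ implicit.
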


When the data is continuous and noiseless, the true kernel is the unique minimizer, i.e., it is identifiable by the loss functional, if it is in the FSOI. Also, note that the FSOI is data-dependent. When the data is discrete or noisy, the unique minimizer is an optimal estimator in the FSOI, and it converges to the true kernel as the data mesh refines (see Remark \ref{rmk:discreteFrechet}) and the noise to signal ratio reduces.

The proof of Theorem \ref{thm:space_ID} is based on the uniqueness of zero of the Fr\'echet derivative of the loss functional, which becomes clear from the following lemma. Their proofs are deferred to Appendix \ref{sec:proofs}. 

\begin{lemma}[The Fr\'echet derivative of the loss functional]\label{lemma:D_lossFn}
The Fr\'echet derivative of the loss functional $\calE$ in $L^2(\rho_N)$, with  $\LGbar$ defined in \eqref{eq:LG} and $\phi_N^f$ defined in Theorem {\rm\ref{thm:space_ID}}, is   
\begin{equation*}
\nabla \calE(\phi) =  2 ( \LGbar \phi - \phi_N^f). 
\end{equation*} 
\end{lemma}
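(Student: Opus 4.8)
The plan is to compute the Fréchet derivative of the loss functional $\calE$ directly from its definition and then identify the resulting expression with the operator $\LGbar$ applied to $\phi$, minus the Riesz representer $\phi_N^f$. Since the loss $\calE(\phi) = \frac{1}{N}\sum_{i=1}^N \|L_\phi[u_i]-f_i\|_{L^2}^2$ is quadratic in $\phi$ (because $L_\phi$ is linear in $\phi$), I expect the derivative to be affine, so the calculation should be exact rather than merely first-order. First I would expand the square and write $\calE(\phi) = \dbinnerp{\phi,\phi} - \frac{2}{N}\sum_i \int L_\phi[u_i](x) f_i(x)\,dx + C_N^f$, where $\dbinnerp{\cdot,\cdot}$ is the bilinear form in \eqref{eq:binlinearForm}. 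The derivative in a direction $\psi$ is then $D\calE(\phi)[\psi] = 2\dbinnerp{\phi,\psi} - \frac{2}{N}\sum_i \int L_\psi[u_i](x) f_i(x)\,dx$, using symmetry of the bilinear form.

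The main work is to rewrite the bilinear form $\dbinnerp{\phi,\psi}$ as an $L^2(\rho_N)$ inner product $\innerp{\LGbar\phi,\psi}_{L^2(\rho_N)}$. To do this I would substitute the definition $L_\phi[u_i](x) = \int_\Omega \phi(|y-x|)[u_i(y)-u_i(x)]\,dy$ into \eqref{eq:binlinearForm} and pass to polar coordinates, writing $y-x = s\eta$ and $y'-x = r\xi$ with $|\eta|=|\xi|=1$, so the $dy$ integrals become $s^{d-1}\,ds\,d\eta$ and $r^{d-1}\,dr\,d\xi$. Collecting the $x$-integral and the angular integrals produces exactly the kernel $G(r,s)$ defined in \eqref{eq:G} (up to the radial Jacobian factors), giving $\dbinnerp{\phi,\psi} = \int\int \phi(s)\psi(r) G(r,s)\,dr\,ds$ in Lebesgue measure. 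The key algebraic step is then to convert the Lebesgue integrals into integrals against $\rho_N$: since $\rho_N(dr) = \rho_N'(r)\,dr$, and $\Gbar = [\rho_N'(r)\rho_N'(s)]^{-1}G$, the two density factors exactly cancel the measure weights, yielding $\dbinnerp{\phi,\psi} = \int\int \phi(s)\Gbar(r,s)\psi(r)\,\rho_N(ds)\,\rho_N(dr) = \innerp{\LGbar\phi,\psi}_{L^2(\rho_N)}$.

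For the linear term, I would invoke the Riesz representation theorem in $L^2(\rho_N)$: the map $\psi \mapsto \frac{1}{N}\sum_i \int 2L_\psi[u_i](x) f_i(x)\,dx$ is a bounded linear functional (boundedness following because $L_\psi$ depends continuously on $\psi$ in the relevant norm, with the same exploration-measure structure controlling it), so it equals $\innerp{\phi_N^f,\psi}_{L^2(\rho_N)}$ for the unique representer $\phi_N^f$, which is precisely how $\phi_N^f$ is defined in the theorem statement. Combining the two pieces gives $D\calE(\phi)[\psi] = \innerp{2\LGbar\phi - 2\phi_N^f,\,\psi}_{L^2(\rho_N)}$ for all $\psi$, so the gradient (the Riesz representer of the Fréchet derivative) is $\nabla\calE(\phi) = 2(\LGbar\phi - \phi_N^f)$, as claimed.

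The step I expect to be the main obstacle is the change of variables turning the bilinear form into the symmetric kernel $G$ and the careful bookkeeping of the radial Jacobians so that the density factors in $\Gbar$ cancel cleanly against $\rho_N$; one must verify that the definition of $G$ and the exploration measure $\rho_N$ in \eqref{eq:rho_disc} are normalized consistently so that $\LGbar$ is genuinely self-adjoint on $L^2(\rho_N)$ and that $G(r,s)=0$ outside $\supp(\rho_N)$ makes $\Gbar$ well-defined despite the $[\rho_N']^{-1}$ factor. A secondary technical point is confirming the boundedness of the linear functional that justifies the Riesz representation; I would handle this by noting that the weight $w_i(x,y)=|u_i(x)-u_i(y)|$ appearing in $\rho_N$ is exactly what bounds $\|L_\psi[u_i]\|_{L^2}$ in terms of $\|\psi\|_{L^2(\rho_N)}$.
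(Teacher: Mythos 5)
Your proposal follows essentially the same route as the paper's proof: expand the quadratic loss via the bilinear form $\dbinnerp{\cdot,\cdot}$, rewrite that bilinear form as $\innerp{\LGbar\phi,\psi}_{L^2(\rho_N)}$ by collecting the $x$-integral and angular integrals into the kernel $G(r,s)$ and re-weighting by $\rho_N$ to get $\Gbar$, and identify the linear term with the Riesz representer $\phi_N^f$. Your attention to the radial Jacobian factors is, if anything, slightly more careful bookkeeping than the paper's own derivation of \eqref{eq:binlinearForm_int}, but the argument is the same.
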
 

\textbf{System-intrinsic data-adaptive RKHS.} Theorem \ref{thm:space_ID} highlights two fundamental challenges: the inverse problem is well-defined only in the FSOI, and it is ill-posed in the FSOI because it involves the inverse of a compact operator $\LGbar$ (as shown in the next lemma). Fortunately, the integral kernel $\Gbar$ defines a reproducing kernel Hilbert space (RKHS), which provides a regularization norm to ensure the learning to take place in the FSOI and to overcome the ill-posedness. This RKHS is system intrinsic as it depends on the structure of the system of nonlocal operators, and it is data-adaptive,  utilizing both the exploration measure and the data $\{u_i\}$. Thus, we call it SIDA-RKHS. 


\begin{lemma}[Characterization of the SIDA-RKHS] \label{lemma:Gbar}
Suppose that  the data $\{u_i\}$ are continuous in $\Omega$. Then, the following statements hold true. \vspace{-3mm}
\begin{itemize}\setlength\itemsep{-0.5mm}
\item[(a)]  The integral kernel $\Gbar$ defined in \eqref{eq:Gbar}  is positive semi-definite. 
\item[(b)] The integral operator $\LGbar: L^2(\rho_N)\to L^2(\rho_N)$  defined in \eqref{eq:LG}
is compact and positive semi-definite, and we have, for any $\phi,\psi\in L^2(\rho_N)$,
\begin{equation}\label{eq:dbinnerp_LG}
\dbinnerp{\phi,\psi} = \innerp{\LGbar \phi,\psi}_{L^2(\rho_N)}.
\end{equation}
\item[(c)] The RKHS $H_G$ with $\Gbar$ as reproducing kernel satisfies $H_G = \LGbar^{1/2} (L^2(\rho_N) )$, and its inner product satisfies  $\innerp{\phi,\psi}_{H_G} = \innerp{\LGbar^{-1/2}\phi,\LGbar^{-1/2}\psi}_{L^2(\rho_N)}$ for any $\phi,\psi\in H_G$.
\item[(d)] The eigenvalues of $\LGbar$ converges to zero, and its eigen-functions $\{\psi_k\}_{k}$ form a complete orthonormal basis of $L^2(\rho_N)$. For any $\phi= \sum_k c_k \psi_k$, we have 
\begin{equation}\label{eq:norms}
\begin{aligned}
\dbinnerp{\phi,\phi} = \sum_k \lambda_k c_k^2, \quad \|\phi\|^2_{L^2(\rho_N)} = \sum_k  c_k^2, \quad \|\phi\|^2_{H_G} = \sum_k \lambda_k^{-1} c_k^2, 
\end{aligned}
\end{equation}\vspace{-1mm}
where the last equation is restricted to $\phi\in H_G$. 
\end{itemize}
\end{lemma}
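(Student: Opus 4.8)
The plan is to prove the four claims in the order (a)$\,\to\,$(b)$\,\to\,$(c)$\,\to\,$(d), since each relies on the previous. First I would record the factorized structure of $G$: writing $F_i^r(x) = \int_{|\xi|=1}[u_i(x+r\xi)-u_i(x)]\,d\xi$, the defining integral \eqref{eq:G} reads $G(r,s) = \frac1N\sum_{i=1}^N \innerp{F_i^r, F_i^s}_{L^2(\Omega)}$, i.e.\ $G$ is an average of Gram kernels in $L^2(\Omega)$. A sum of Gram kernels is positive semi-definite, and multiplying by $[\rho_N'(r)\rho_N'(s)]^{-1}=g(r)g(s)$ preserves this property (it merely rescales test coefficients $c_j\mapsto c_j\,g(r_j)$); this gives (a). The same factorization yields the diagonal Cauchy--Schwarz bound $G(r,s)^2\le G(r,r)G(s,s)$, which I reuse for compactness.

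For the identity in (b), I would expand $L_\phi[u_i](x)$ and $L_\psi[u_i](x)$ in radial coordinates $y=x+r\xi$ (Jacobian $r^{d-1}$), insert them into \eqref{eq:binlinearForm}, and apply Fubini to move the $x,\xi,\eta$ integrations inside. The resulting inner integral is exactly $G(r,s)$ once the radial Jacobians are absorbed into the density $\rho_N'$ via the change of variables defining $\rho_N$ in \eqref{eq:rho_disc}, so $\dbinnerp{\phi,\psi} = \iint \phi(r)\psi(s)\,\Gbar(r,s)\,\rho_N(dr)\rho_N(ds) = \innerp{\LGbar\phi,\psi}_{L^2(\rho_N)}$, using $\Gbar=[\rho_N'\rho_N']^{-1}G$ and $\rho_N(ds)=\rho_N'(s)\,ds$. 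Positivity of $\LGbar$ is then immediate, since $\innerp{\LGbar\phi,\phi}_{L^2(\rho_N)} = \dbinnerp{\phi,\phi} = \frac1N\sum_i \norm{L_\phi[u_i]}_{L^2}^2 \ge 0$, and self-adjointness follows from the symmetry of the bilinear form.

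The main obstacle is compactness. I would argue that $\LGbar$ is Hilbert--Schmidt by showing $\Gbar\in L^2(\rho_N\otimes\rho_N)$, i.e.\ $\iint [\rho_N'(r)\rho_N'(s)]^{-1} G(r,s)^2\,dr\,ds<\infty$. The diagonal bound reduces this to $\int_{\supp(\rho_N)} G(r,r)/\rho_N'(r)\,dr<\infty$, whose integrand I would control using the continuity and boundedness of the $u_i$ on $\Omega$, so that $G(r,r)$ and $\rho_N'(r)$ degenerate at comparable rates near the endpoints (in particular $G(r,r)$ vanishes faster as $r\to 0$, since the spherical average of the first-order term $r\,\nabla u_i\cdot\xi$ cancels by symmetry). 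This singular-weight estimate, taming the $1/\rho_N'$ factor, is the technical crux. Granting it, $\LGbar$ is compact, self-adjoint and positive, so the spectral theorem supplies eigenpairs $(\lambda_k,\psi_k)$ with $\lambda_k\ge 0$, $\lambda_k\to 0$, and $\{\psi_k\}$ a complete orthonormal basis of $L^2(\rho_N)$. Parseval then gives $\norm{\phi}_{L^2(\rho_N)}^2=\sum_k c_k^2$ for $\phi=\sum_k c_k\psi_k$, and applying (b) gives $\dbinnerp{\phi,\phi}=\innerp{\LGbar\phi,\phi}_{L^2(\rho_N)}=\sum_k\lambda_k c_k^2$, two of the three identities in (d).

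For (c) I would invoke the standard correspondence between a Mercer kernel and its integral operator: the RKHS of $\Gbar$ equals $\LGbar^{1/2}(L^2(\rho_N))$ with $\innerp{\phi,\psi}_{H_G}=\innerp{\LGbar^{-1/2}\phi,\LGbar^{-1/2}\psi}_{L^2(\rho_N)}$, equivalently $H_G=\{\sum_k c_k\psi_k : \sum_k\lambda_k^{-1}c_k^2<\infty\}$ with that sum equal to $\norm{\phi}_{H_G}^2$, which is the last identity in (d); the content is verifying the reproducing property $\innerp{\Gbar(r,\cdot),\psi_k}_{H_G}=\psi_k(r)$. Because $\Gbar$ is unbounded (the $1/\rho_N'$ factor), I would apply Mercer's theorem only after the unitary change of variables $\phi\mapsto\phi\sqrt{\rho_N'}$, which conjugates $\LGbar$ into the integral operator on $L^2(dr)$ with the continuous kernel $G(r,s)/\sqrt{\rho_N'(r)\rho_N'(s)}$; transferring its eigen-expansion back yields (c) and completes the proof.
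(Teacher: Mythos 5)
Your proposal is correct and follows essentially the same route as the paper's proof: positive semi-definiteness of $\Gbar$ via the sum-of-squares/Gram structure of $G$, the identity $\dbinnerp{\phi,\psi}=\innerp{\LGbar\phi,\psi}_{L^2(\rho_N)}$ by the same Fubini-and-reweighting computation, compactness via $\Gbar\in L^2(\rho_N\times\rho_N)$, the standard Mercer-type operator characterization of $H_G$ for (c), and the spectral theorem for compact positive self-adjoint operators for (d). If anything, you are more careful than the paper at the compactness step: the paper simply asserts that boundedness of the $u_i$ makes $\Gbar$ bounded, whereas you correctly flag the possible degeneracy of the $[\rho_N'(r)\rho_N'(s)]^{-1}$ factor and the need to compare the vanishing rates of $G(r,r)$ and $\rho_N'(r)$ as the technical crux.
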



\begin{algorithm}[h!]
{\small
\caption{Nonparametric learning of the nonlocal kernel with SIDA-RKHS regularization}\label{alg:main}
\begin{algorithmic}
\STATE {\textbf{Input:} The data $\{u_i,f_i\}_{i=1}^N = \{u_i(x_j),f_i(x_j)\}_{i,j=1}^{N,J}$  to construct the nonlocal model $L_\phi[ u]=f$.}
\STATE {\textbf{Output:} Estimator $\widehat \phi$}
\STATE {1. Estimate the exploration measure $\rho_N^J$ as in \eqref{eq:rho_disc}, and denote $R$ the upper bound of its support.}
\STATE {2. Get regression data (see Appendix \ref{sec:append_alg}). 
} 
\STATE {3. Select a class of hypothesis spaces $\mH_n =\mathrm{span}\{\phi_i\}_{i=1}^n$ with $n$ in a proper range. 
}
\STATE {4. \textbf{For} {$n$ in the range}}
\STATE {\quad 4a) Compute $(\Abar_n, \bbar_n, B_n)$ for $\mH_n =\mathrm{span}\{\phi_i\}_{i=1}^n$ with $B_n = ( \innerp{\phi_i,\phi_j}_{L^2(\rho_N^J)})_{1\leq i,j\leq n}$;
}
\STATE {\quad 4b) If the basis matrix $B_n$ is singular, stop and remove $n$ from the range;}
\STATE {\quad 4c) Solve the generalized eigenvalue problem $\Abar_n V =  B_n \Lambda V$, where $\Lambda$ is the diagonal matrix of eigenvalues and $V^\top B_n V= I_n$;}
\STATE {\quad 4d) Compute the RKHS-norm matrix $B_{rkhs} = (V\Lambda V^\top)^{-1}$; 
}
\STATE {\quad 4e) Use the L-curve method find an optimal estimator $\widehat \phi_{\lambda_n^*}$.
}
\STATE {5. Select the optimal dimension $n^*$ (and degree if using B-spline basis) that has the minimal loss value (along with other cross-validation criteria if available). Return the estimator $\widehat \phi = \sum_{i = 1}^{n^*} c^i_{n^*} \phi_i$.}
\end{algorithmic}
}\vspace{-1mm}
\end{algorithm}

\begin{remark}[Discrete data]\label{rmk:discreteFrechet}
When the space $L^2(\rho_N^J)$ is a discrete vector space due to discrete data, we learn the kernel on finitely many points $\{r_k\}_{k=1}^n$ explored by the data. In this case, the integral kernel $G$ in {\rm\eqref{eq:G}} becomes a positive semi-definite matrix in $\R^{n\times n}$, so is $\Gbar$ in {\rm\eqref{eq:Gbar}}. Now the operator $\LGbar$ is defined by the matrix $\Gbar$ on the weighted vector space $\R^n$ and its eigenvalues is the generalized eigenvalue of $(\Gbar,B_n)$ with $B_n$ being the diagonal matrix of $\rho_N^J$. As a result, the SIDA-RKHS $H_G$ is the vector space spanned by the eigenvectors with nonzero eigenvalues. Furthermore, its norm in {\rm \eqref{eq:norms}} can be computed directly from the eigen-decomposition. This norm is better suited for regularization even when the SIDA-RKHS has the same dimension as \e{L^2(\rho_N^J)} (or dense in it). As data mesh refines, these vector spaces converges to the corresponding function spaces when the data is smooth. 
\end{remark}\vspace{-1mm}

\vspace{-2mm}
\subsection{Algorithm: LSE with SIDA-RKHS regularization}\label{sec:alg} \vspace{-1mm}
Based on the function space of identifiability, we introduce next a nonparametric learning algorithm with SIDA-RKHS regularization. 
The algorithm consists of three steps. 
First, we utilize the data to estimate the exploration measure and the support of the kernel. Based on them, we set a class of hypothesis spaces, with their dimensions,  i.e., the number of basis functions, in a proper range moving from under-fitting to over-fitting. For the hypothesis space $\mH_n =\mathrm{span}\{\phi_i\}_{i=1}^n$, we compute the basis matrix $B_n = ( \innerp{\phi_i,\phi_j}_{L^2(\rho_N^J)})_{1\leq i,j\leq n} \in \R^{n\times n}$.
Second, we assemble the regression matrices 
from data for each of these hypothesis spaces. We approximate the integrals by Riemann sum or other numerical integrator. 
Finally, we identify an estimator with SIDA-RKHS regularization for each of these hypothesis spaces  by the L-curve method \cite{hansen_LcurveIts_a} and select the one with the best fitting. 
We summarize the method in Algorithm \ref{alg:main}, with its full details provided in Section \ref{sec:append_alg}.

The core innovations are the exploration measure and the regularization using the SIDA-RKHS norm. Importantly, they bring little extra computational cost. The exploration measure is available directly from data. The SIDA-RKHS norm is computed directly from the triplet $(\Abar_n, \bbar_n, B_n)$ using the generalized eigenvalue problem as detailed in the algorithm.

Our SIDA-RKHS regularization uses the RKHS norm $\calR(\phi) = c^\top B_{rkhs} c$, where $B_{rkhs}$ is defined in (4d) in Algorithm \ref{alg:main}. It differs from the commonly-used Tikhonov/ridge regularization using either the $l^2$-norm that sets $\calR(\phi) = \sum_i c_i^2$ or the $L^2(\rho_N^J)$-norm that sets $\calR(\phi) = c^\top B_n c$. We note that the three norms become the same when $B_n=I_n$ and all the eigenvalue of $A_n$ are $1$. 

\vspace{-3mm}
\section{Tests on synthetic and real-world data}\label{sec:num}\vspace{-2mm}
We test our nonparametric learning method on both synthetic data and real-world data in 1D examples. On each dataset, we compare our SIDA-RKHS regularizer with two baseline regularizers using the $l^2$ and $L^2$ norm (denoted as l2 and L2, respectively). All these regularizers use the same L-curve method to select the hyper-parameter $\lambda$ as described in Appendix \ref{sec:append_Lcurve}. In the case of synthetic data, we systematically examine the method with three types of kernels in the regimes of noiseless and noisy data. Since the ground-truth kernel is known, we study the convergence of estimators to the true kernel as the data mesh refines. We also apply our method to a real-world dataset for stress wave propagation in a heterogeneous bar, with the goal of constructing a homogenized model from microscale data.  Since there is no ground-truth, 
we examine the performance of estimators by studying their physical stability and capability of reproducing the wave motion on a cross-validation dataset. 
All datasets and codes used will be publicly released on GitHub. 

\textbf{Settings for the learning algorithm.} In implementation of Algorithm \ref{alg:main}, we use B-spline basis functions consisting of piece-wise polynomials with degree 2 so that the estimated kernel is twice differentiable (see Section \ref{sec:Bsplines} for a brief introduction of B-splines). 
The knots of B-splines are evenly spaced on interval $[0, R]$, with one additional knots at $0$ to make the first basis nonzero at $x=0$.  We select the dimension with minimal loss from a sequence of dimensions in the range $\floor{\frac{R}{\Delta x}} \times [0.2,1]$ 
as long as the basis matrix $B_n$ is well-conditioned.

\vspace{-2mm}
\subsection{Examples with synthetic data} \vspace{-2mm}
\textbf{Numerical settings.} 
We consider three kernels: a sine kernel, a Gaussian kernel, and a fractional Laplacian kernel (specified below). They act on the same set of functions $\{u_i\}_{i=1,2}$ with $u_1= \sin(x)\mathbf{1}_{[-\pi,\pi]}(x)$ and $u_2(x) =\cos(x)\mathbf{1}_{[-\pi,\pi]}(x)$. In the ground-truth model, the integral $L_\phi[u_i]$ is computed by the adaptive Gauss-Kronrod quadrature method, which is much more accurate than the Riemann sum integrator that we will use in the learning stage.
To create discrete datasets with different resolutions, for each $\Delta x\in0.0125\times\{1,2,4,8,16\}$, we take values $\{u_i,f_i\}_{i=1}^N = \{u_i(x_j),f_i(x_j):x_j \in {[-40,40]}, j=1,\ldots,J\}_{i=1}^N$, where $x_j$ is a point on the uniform grid with mesh size $\Delta x$.

For each kernel, we consider both noiseless and noisy data with different noise levels, with a noise-to-signal-ratio ($nsr$) taking values $\{0,0.5,1,2\}$. Here the noise is added to each spatial mesh point, independent and identically distributed centered Gaussian with standard deviation $\sigma$, and the noise-to-signal-ratio is the ratio between $\sigma$ and the average $L^2$ norm of $f_i$.  

The three ground-truth kernels are specified as follows.\newline\vspace{0.5mm}
\noindent$\bullet$  \emph{Sine kernel.} The sine kernel is $\phi_{true}(r)= \sin(6r)\mathbf{1}_{[0,10]}(r)$. This sine kernel represent a smooth oscillating kernel in the same class as the data $u_i$. The estimated support is in $[0,R]$ with $R=11.02$.  \newline \vspace{0.5mm}
\noindent$\bullet$  \emph{Gaussian kernel.} The Gaussian kernel $\phi_{true}$ is the Gaussian density centered at 5 with standard deviation 1. This kernel represents a smooth kernel. 
It has $R=11.58$. 
\newline \vspace{0.5mm}
\noindent$\bullet$ \emph{Fractional Laplacian kernel.} It is a truncated version of the fractional Laplacian kernel 
that has been widely studied in fractional and nonlocal diffusions (see e.g., \cite{bucur2016_NonlocalDiffusion,applebaum09,du2012_AnalysisApproximation,wang2012fast}). We set 
$\phi_{true}(r) = c_{d,s} r^{-(d+2s)} \mathbf{1}_{[0.1,6]}(x) + 10^{d+2s} \mathbf{1}_{[0,0.1]}(x)$ 
with exponent $s = 0.5$ and $d = 1$, where $c_{d,s} = 4^s\pi^{-d/2}\Gamma(d/2+s) \Gamma(-s)$. 
It is almost singular with multiscale values and its values near the singularity are crucial to the operator. It has $R =6.51$. 
\vspace{-1mm}
\begin{figure}[h!]\vspace{-3mm}
{\includegraphics[width =1.0\textwidth]{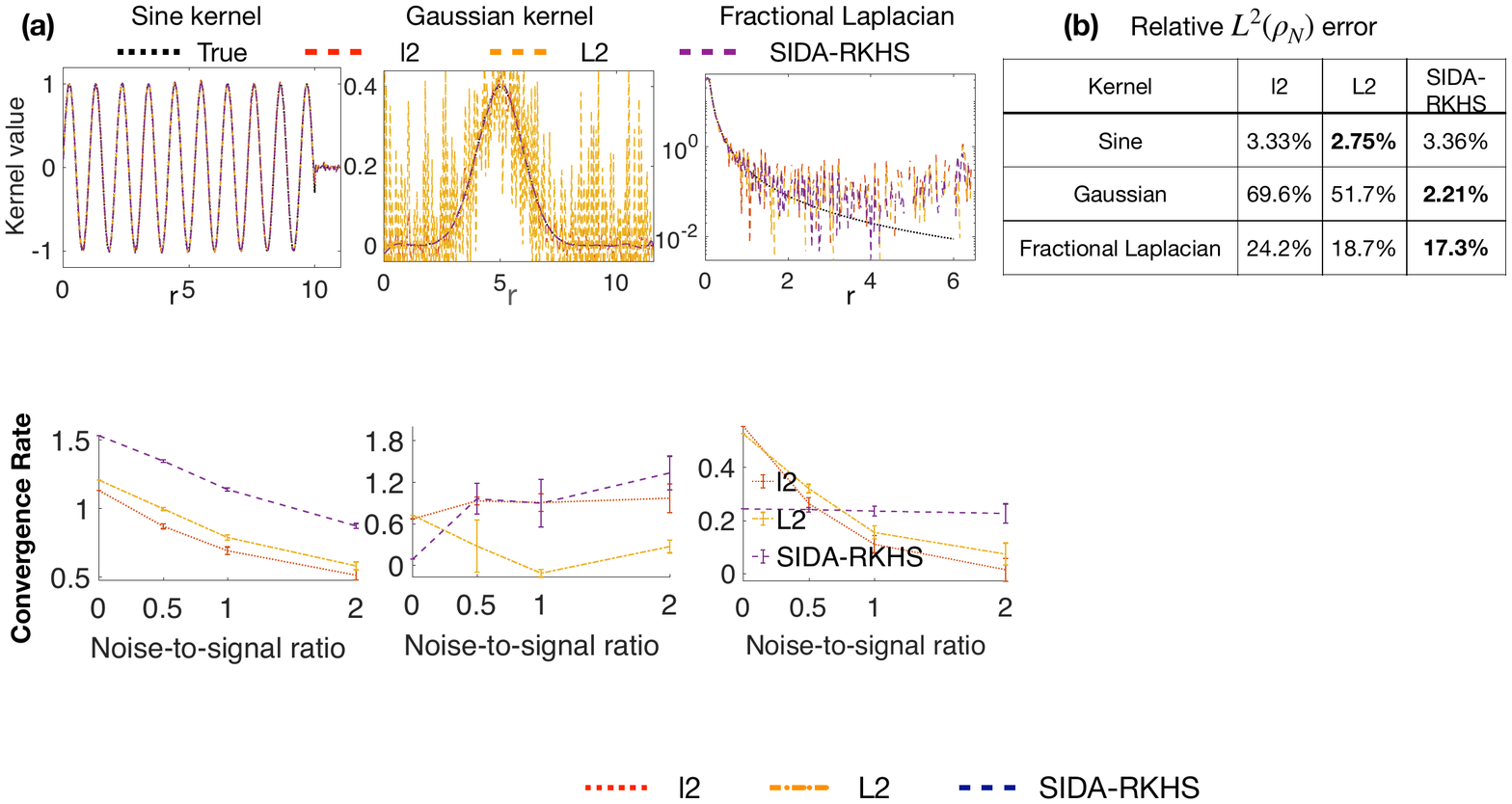}}\vspace{-6mm}
\caption{(a): Typical estimators from noisy data with noise-to-signal-ratio $\mathrm{nsr} =1$ and $\Delta x =0.025$. 
(b): the relative $L^2(\rho_N^J)$ errors of these estimators. Bold numbers highlight the best method. The SIDA-RKHS regularizer consistently obtains accurate estimators in all three cases. 
}\label{fig:typicalEstftn} 
\end{figure}\vspace{-1mm}

\textbf{Performance of the regularizers.} 
We present the typical estimators and the convergent rate of the estimator as data mesh refines.  Figure \ref{fig:typicalEstftn} shows typical estimators for the three examples from noisy data with a noise-to-signal ratio  nsr=1 and $\Delta x = 0.025$. The hypothesis space's 
dimension is selected by minimal loss value. All three regularizers are able to estimate  the Sine kernel accurately and the Fractional Laplacian kernel reasonably. The SIDA-RKHS regularizer significantly outperforms the regularizers with $l^2$ or $L^2$-norm in the example of the Gaussian kernel. 

	\vspace{-1mm} 
\begin{figure}[H]	\vspace{-2mm} 
    \centering 	
    {\includegraphics[width =1.0\textwidth]{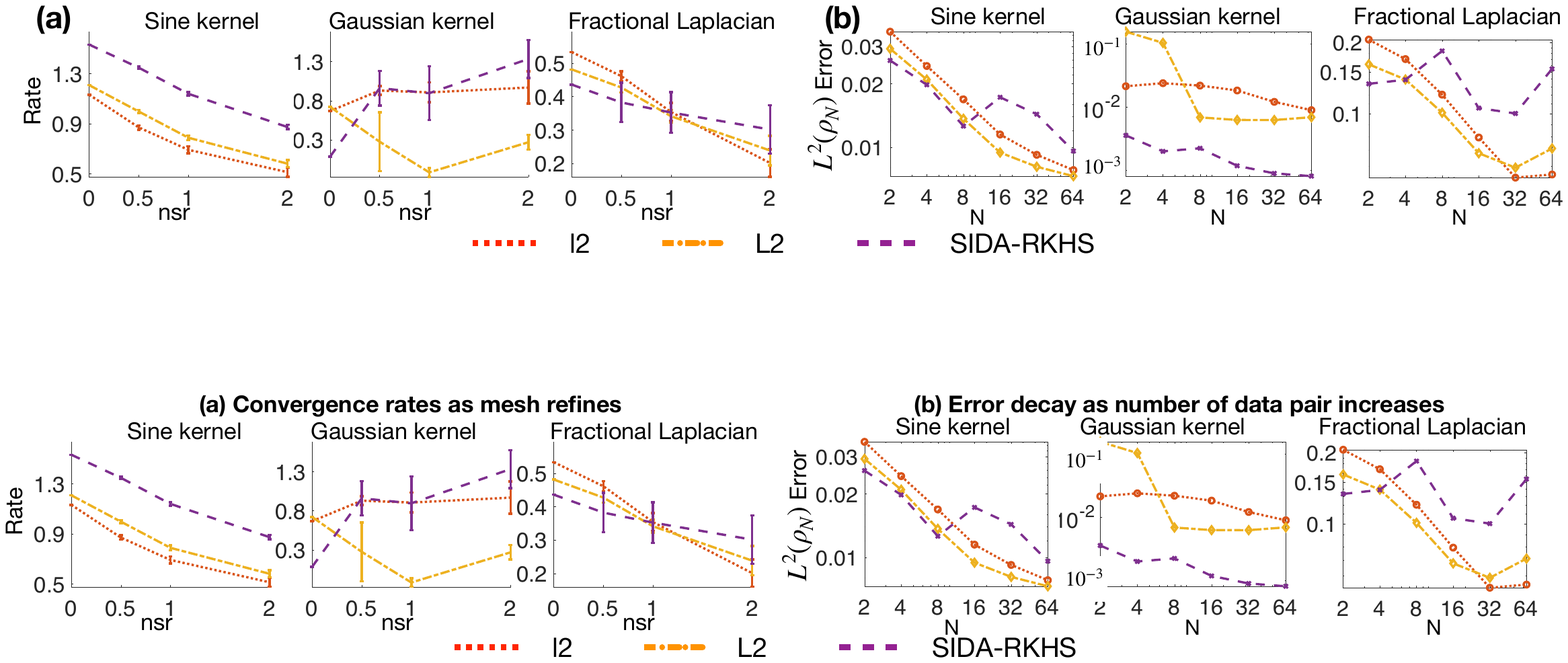}}
	\vspace{-6mm} 
\caption{(a) The means and standard deviations of the convergence rates as mesh refines in 100 independent simulations. The SIDA-RKHS regularizer obtains consistent rates for noisy data. (b) Error decay as the number of data pairs increases when $nsr=1$ and $\Delta x=0.0125$. } \label{fig:convMeanStd2}		\vspace{-5mm}
\end{figure}
The SIDA-RKHS regularizer's superior performance is further validated by the rates of convergence when $\Delta x$ decreases, from 100 independent simulations with noises with $nsr\in \{0, 0.5,1,2\}$, as shown in Figure \ref{fig:convMeanStd2}. The SIDA-RKHS regularizer has rates generally higher than those of the other two regularizers when the data gets more noisy. We note that the SIDA-RKHS regularizer has deceivingly lower rates for noiseless data, even though it actually has more accurate estimators (see Figure \ref{fig:convftn} in appendix). Thus, the goal is to seek an accurate estimator with a consistent rate. Here the rates for the smooth kernels are higher than the rate for the singular kernel, because the order of numerical error in the Riemann sum integrator are higher (see \cite{fan2021asymptotically}). 

\textbf{Increasing the number of data pairs.} Sine the operator is linear, only linearly independent data brings new information for the learning. Figure \ref{fig:convMeanStd2}(b) shows that as $N$ of data $\{u_i(x)\}_{i=1}^N = \{\sin(ix),\cos(ix)\}_{i=1}^{N/2}$ increases, the estimators become more accurate but without a convergence rate. Note that the data pairs do not provide independent (random) samples of $\rho_N$, which also varies with data. Thus, our learning problem is fundamentally different from regression for random samples and we do not expect a convergence rate $N^{-1/2}$.  An interesting future direction is to design experiments to collect informative data to enlarge the FSOI and accelerate the convergence.


In summary, the SIDA-RKHS regularizer consistently obtains accurate convergent estimators when data mesh refines for either noiseless or noisy data. On the contrary, the regularizers with $l^2$ norm or $L^2$ norm, are not robust to noise and may fail to converge, due to their negligence of the FSOI.

\vspace{-1mm}
\subsection{Homogenization of wave propagation in meta-material} \label{sec:metamaterial}

\begin{figure}[h!]
\centering
\subfigure{\includegraphics[width=1.\columnwidth]{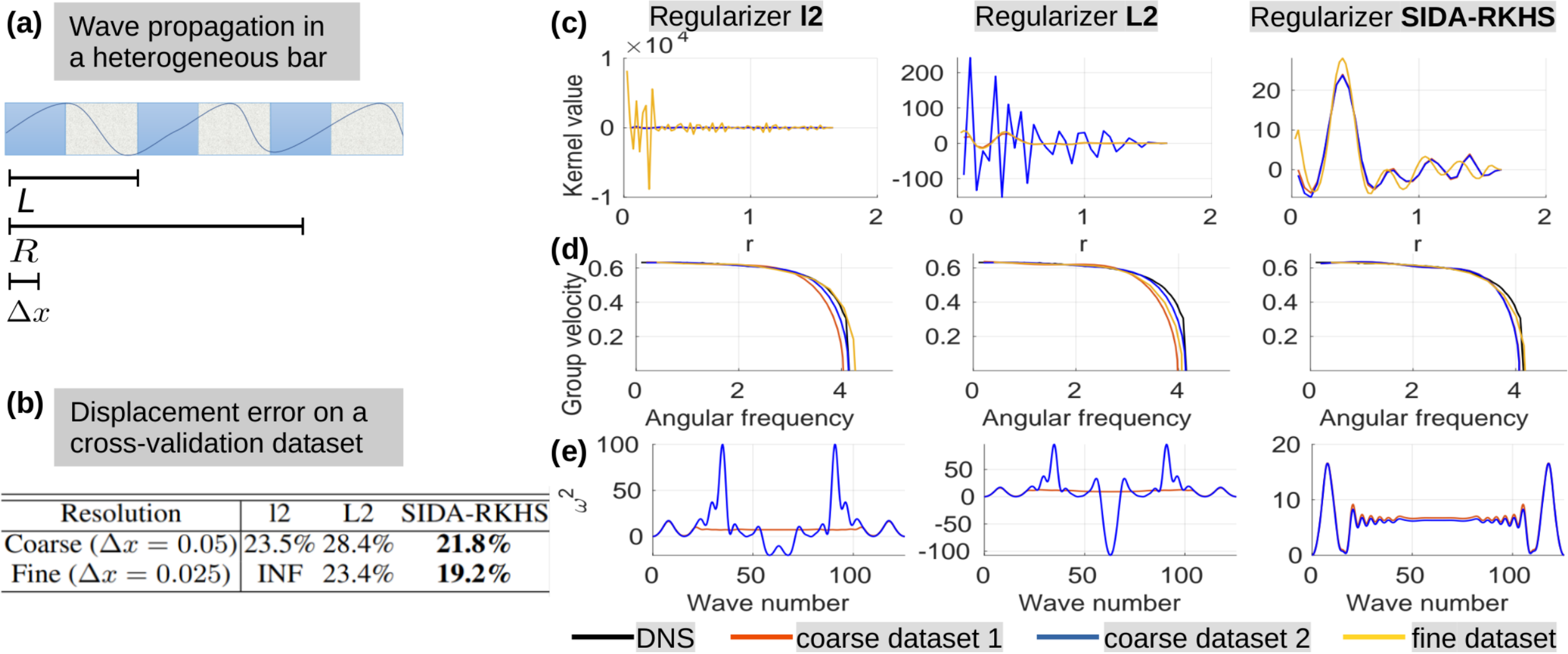}}    \vspace{-6mm}
\caption{\small Real-world application: wave propagation in a heterogeneous bar with ordered microstructure of period $L=0.4$, and the estimated support of the kernel has a bound $R=1.65$. 
\vspace{-3mm}
}\label{fig:bicrack_pa}
\end{figure}

We seek a nonlocal homogenized model for the stress wave propagation in a one-dimensional heterogeneous bar with a periodic microstructure. 
For this problem, the goal is to obtain an effective surrogate model from high-fidelity (HF) datasets generated by solving classical wave equation, acting at a much larger scale than the size of the microstructure. Differing from previous examples, this problem has no ground-truth kernel. Therefore, we evaluate the estimator by measuring its effectiveness of reproducing HF data in applications that are subject to different loading conditions with a much longer time from the problems used as training data. 

For both training and validation purposes we use the HF dataset generated by the direct numerical solver (DNS) introduced in \cite{silling2020_PropagationStressPulse}, which provides exact solutions of velocities including the appropriate jump conditions for the discontinuities in stress that occur at waves. Although the DNS has high accuracy on wave velocity, it is not suitable for long-term prediction because it requires the modeling of wave propagation through thousands of microstructural interfaces, which makes the computational cost prohibitive.
To accelerate the computation,  we approximate the HF model by a nonlocal model:
\begin{equation}\label{eq: NLmodel}
\partial_{tt} {u}(x,t)-L_\phi[{u}](x,t)=g(x,t), \text{ for }(x, t) \in \Omega \times [0, T],
\end{equation}
where $L_\phi$ is a nonlocal operator in the form of \eqref{eq:nonlocal_op} with a kernel $\phi$ being supported in $[0, R]$.

\textbf{Experiment settings.}
We consider four types of data: three for training and one for validation of our algorithm. Three types of training datasets are employed: In Type 1 dataset, the bar is subject to an oscillating source $g(x,t)$; In Type 2 dataset, a boundary velocity loading \e{\partial_t {u}(-50,t) = \cos(j t)} is applied;  In Type 3 dataset, all settings are the same as in Type 2, except that the $\cos(jt)$ type loading is replaced by $\sin(jt)$. In all training datasets we consider a relatively small domain $\Omega=[-50,50]$ and short time $t\in[0,2]$. Two spatial resolutions, $\Delta x=0.05$ and $\Delta x=0.025$ are considered, which we denote as the ``coarse'' and ``fine'' datasets, respectively.

With these three types of training datasets, we design three experiment settings to validate our method: 
\newline
$\bullet$ \emph{Coarse dataset} 1: 
    we train the estimator using ``coarse'' dataset of Types 1 and 2.
\newline
$\bullet$ \emph{Coarse dataset} 2: 
    we train the estimator using ``coarse'' dataset of Types 1 and 3. By comparing the learnt estimator from this setting with the result from setting 1, we mean to investigate the sensitivity of the inverse problem with respect to the choice of datasets.
\newline
$\bullet$ \emph{Fine dataset:} 
    we train the estimator using ``fine'' dataset of Types 1 and 2. By comparing the learnt estimator from this setting with the result from setting 1, we aim to check the convergence of the estimator with increasing data resolution. Note that the problem might becomes more ill-posed when decreasing $\Delta x$. Therefore, proper regularization is expected to become more important. 

Additionally, we create a validation dataset, denoted as Type 4 dataset, very different from the training dataset. It considers a much longer bar ($\Omega=[-133.3,133.3]$), under a different loading condition from the training dataset, and with a $50$ times longer simulation time ($t\in [0,100]$). Therefore, the cross-validation error checks the generalizability of the estimators.

\textbf{Results assessment.} We present the learnt estimators in Figure \ref{fig:bicrack_pa}. Since there is no ground-truth kernel, we assess the performance of each estimator based on three criteria. Firstly, we report in Figure \ref{fig:bicrack_pa}(b) the prediction $L^2$ error of displacement on the cross validation dataset at $T=100$. 
Secondly, we report in Figure \ref{fig:bicrack_pa}(d) the resultant estimators 
the group velocity curves from our model and compare them with the curves computed with DNS. These curves directly depicts how much our surrogate model reproduces the dispersion properties in the heterogeneous material. At last, the learnt model should provide a physically stable material model. To check this, we also report the dispersion curve in  \ref{fig:bicrack_pa}(e). Its positivity indicates that the learnt nonlocal model is physically stable.

\textbf{Performance of the estimators.} Comparing the three estimators in Figure \ref{fig:bicrack_pa}(c), one can see that only the SIDA-RKHS regularizer obtains consistent estimators in all three experiment settings. 
The oscillatory estimators of regularizers with $l^2$ or $L^2$-norm verify the ill-posedness, and highlight the importance of using proper regularizers in nonlocal operator learning methods. The dispersion curves in Figure \ref{fig:bicrack_pa}(e) stress the importance of regularizer from another aspect of view: our SIDA-RKHS regularizer provides physically stable material models in all settings, while the regularizers with $l^2$ or $L^2$-norm may result in highly oscillatory and non-physical models.

We further examine the regularized estimator in terms of its capability in reproducing DNS simulations through the prediction error of $u$ on the cross validation dataset. 
When $\Delta x=0.025$, it takes about $48$ hours for the DNS simulation to generate one sample, while the homogenized nonlocal model only requires less than $20$ minutes. From \ref{fig:bicrack_pa}(b), we can see that when $\Delta x=0.05$, all three regularizers are robust and able to reproduce the DNS simulation with a reasonable accuracy ($\sim 20\%$). When we increase the data resolution to $\Delta x=0.025$, the estimated nonlocal model from $l^2$ regularizer becomes unstable, which again verifies our analysis: when the data mesh refines, the kernel learning problem becomes more ill-posed and a good regularizer becomes a necessity. Meanwhile, both the $L^2$ and SIDA-RKHS regularizers lead to a more accurate estimator, indicating a trend of convergence. 
On both datasets, the SIDA-RKHS regularizer obtains the most accurate estimators.

\vspace{-3mm}
\subsection{Limitations and future directions} \label{sec:limitations}
\vspace{-1mm}
\emph{Non-radial high-dimensional kernels.} 
When the kernel is radial, our algorithm is readily applicable to higher dimensions (see Appendix \ref{sec:append_alg}). When the kernel is non-radial high-dimensional, however, the regression will face the well-known curse-of-dimensionality, but our identifiability theory remains valid. Thus, a future direction is to utilize methods such as kernel-regression or neural networks and further develop the SIDA-RKHS regularization. 
\\
\emph{ Convergence analysis.} We have obtained convergent regularized estimators, but a convergence analysis is left as future work. The main difficulty to overcome is the complex combination of three factors: operator spectrum decay, the errors from numerical integration and noise, and regularization. 
 \vspace{-4mm}
\section{Conclusion}\label{sec:conclusion} \vspace{-1mm}
We have characterized the identifiability pitfall in the learning of kernels in nonlocal operators, and proposed a new regularization method to fix this issue and achieve estimator convergence. In particular, we have established a rigorous identifiability theory for the nonparametric learning of kernels in nonlocal diffusion operators, specifying the function space of identifiability. Based on the theory, we have introduced a nonparametric regression algorithm equipped with a data adaptive RKHS regularization method. Tests on synthetic and real-world datasets show that the our algorithm consistently obtains accurate and convergent estimator, outperforming common benchmark regularizers. 
Our method addresses the critical estimator diverging phenomena observed in previous nonlocal operator learning methods, and the proposed framework provides a promising new direction towards overcoming the ill-posedness to achieve convergence in operator learning.

\newpage

\appendix

\section{Proofs}\label{sec:proofs}

\begin{proof}[Proof of Lemma {\rm \ref{lemma:Gbar}}] 
Part (a) follows directly from the definition of $\Gbar$. Recall that a bivariate function $\Gbar$ is positive semi-definite iff for any $(c_1,\ldots,c_m)\in \R^m$ and any $\{r_j\}_{j=1}^m\subset \R^{d}$, the sum $\sum_{k=1}^m\sum_{j=1}^m c_kc_j\Gbar(r_k,r_j)\geq 0$ (see e.g. \cite{BCR84,Cucker2007,LLMTZ21}). Then, noting that from \eqref{eq:G} and \eqref{eq:Gbar}  we have 
\begin{align*}
& \sum_{k=1}^m\sum_{j=1}^m c_kc_j\Gbar(r_k,r_j) \\
= &  \frac{1}{N}\sum_{i=1}^N\int_{|\eta | =1} \int_{|\xi |=1}  \left[ \int \sum_{k=1}^m\sum_{j=1}^m c_kc_j\frac{[u_i(x+r_k\xi) - u_i(x) ] [u_i(x+r_j\eta) - u_i(x) ]}{\rho_N'(r_j)\rho_N'(r_k)}  dx \right]d\xi d\eta \\
= & \frac{1}{N}\sum_{i=1}^N\int_{|\eta | =1} \int_{|\xi |=1}  \left[ \int \left|\sum_{k=1}^m c_k\frac{[u_i(x+r_k\xi) - u_i(x) ] }{\rho_N'(r_k)}  dx\right|^2 \right]d\xi d\eta \geq 0.
\end{align*}
Thus, $\Gbar$ is positive semi-definite. 

For Part (b), the operator $\LGbar$ is compact because  $\Gbar\in L^2(\rho_N\times \rho_N)$, which follows from the fact that each $u_i$ is bounded (thus, $\Gbar$ is also bounded). Also, since $\Gbar$ is positive semi-definite, so is $\LGbar$. The equation \eqref{eq:dbinnerp_LG} follows from \eqref{eq:binlinearForm_int}. 

Part (c) is a standard operator characterization of the RKHS $H_G$ (see e.g., \cite{Cucker2007}). 

For Part (d), the eigenfunctions are orthonormal and the eigenvalues decay to zero because the operator $\LGbar$ is positive semi-definite and compact, as shown in Part (b). The first equation in \eqref{eq:norms} follows from \eqref{eq:dbinnerp_LG}, and the second equation follows from the orthogonality of the eigenfunctions. At last, if $\phi\in H_G$, by the characterization of $H_G$'s inner product in Part (c), we have the third equation in \eqref{eq:norms}. 
\end{proof}

\begin{proof}[Proof of Lemma {\rm \ref{lemma:D_lossFn}}]
 Recall that with the bilinear form $\dbinnerp{\cdot,\cdot}$, defined in \eqref{eq:binlinearForm}, 
 we can rewrite the loss functional as 
 \begin{equation}\label{eq:lossFn2}
 \begin{aligned}
 \calE(\phi)
 & = \dbinnerp{\phi,\phi} -  \frac{1}{N}\sum_{i=1}^N \int 2 L_\phi[u_i](x) f_i(x)dx + C_f,
\end{aligned}
\end{equation}
 where $C_N^f = \frac{1}{N}\sum_{k=1}^N  \int |f_i (x) |^2 dx $. Then, the derivative $\nabla \calE(\phi)$ follows from \eqref{eq:dbinnerp_LG} and a rewriting of the bilinear form: 
\begin{align}
& \dbinnerp{\phi_1,\phi_2}=  \frac{1}{N}\sum_{i=1}^N \int \left[ \int \int \phi_1(|z|) [u_i(x+z) - u_i(x) ]\phi_2(|y|) [u_i(x+y) - u_i(x) dydz]  \right] dx  \notag \\
 = & \frac{1}{N}\sum_{i=1}^N \int \int \phi_1(|z|)\phi_2(|y|)  \left[ \int  [u_i(x+z) - u_i(x) ] [u_i(x+y) - u_i(x) ] dx \right] dydz \notag  \\
 = &  \int_0^\infty \int_0^\infty \phi_1(r)\phi_2(s) G(r,s) drds =   \int_0^\infty \int_0^\infty \phi_1(r)\phi_2(s) \Gbar (r,s) \rho_N(dr)\rho_N(ds),  \label{eq:binlinearForm_int}
\end{align}
with $G$ and $\Gbar$ given in \eqref{eq:G} and \eqref{eq:Gbar}, where the last equality is a re-weighting by $\rho_N$. 
\end{proof}

\begin{proof}[Proof of Theorem {\rm \ref{thm:space_ID} }] 
By Lemma \ref{lemma:D_lossFn}, the Fr\'echet derivative of the loss functional is $\nabla \calE(\phi) =  2 ( \LGbar \phi - \phi_N^f)$. Thus, the loss functional has a unique minimizer only in the function space where $\nabla \calE(\phi)$ has a unique zero, that is, the operator $\LGbar$ has an inversion. The largest such a space is the eigenspace expanded by all eigenfunctions with non-zero eigenvalues of $\LGbar$. Furthermore, projecting $\phi_N^f$ to this sapce, we have the the minimizer $\widehat \phi = \LGbar^{-1} P\phi_N^f $ as given in the theorem. 
\end{proof}

\section{Algorithm: nonparametric regression with SIDA-RKHS regularization}\label{sec:append_alg}

In this section we provide detailed description of the algorithm proposed in Section \ref{sec:alg}. 

Our algorithm consists of three steps. 
First, we utilize the data to estimate the exploration measure and the support of the kernel. Based on them, we set a class of hypothesis spaces, with their dimensions  i.e., the number of basis functions, in a proper range moving from under-fitting to over-fitting.  
Second, we assemble the regression matrices vectors from data for each of the hypothesis spaces. 
Finally, we identify the estimators with SIDA-RKHS regularization for these hypothesis spaces and select the one with the best fitting.

To start, we assume that the discrete data $\{u_i(x_j), f_i(x_j)\}_{i = 1}^{N}$ comes with equidistant mesh points $\{x_j = j\Delta x\}_{j=0}^J$. For simplicity, we consider only the 1D case, and the extension to multi-dimensional cases is straightforward. We note that the current problem setting assumes data on mesh points, thus the data size increases exponentially as the dimension increases, which is the well-known curse-of-dimensionality. To overcome this curse-of-dimensionality, one can consider other settings with mesh-free representation of data by random samples and a loss functional based on expectations (see, e.g. \cite{LangLu22}), and this is beyond the scope of the current study.  

\paragraph{Step 1: Set a class of hypothesis spaces.} We set a class of data-adaptive hypothesis spaces with their dimensions set to range from under-fitting to over-fitting. The key is the exploration measure and the support of the kernel estimated data. The exploration measure $\rho_N^J$ is computed from data as in \eqref{eq:rho_disc}, which uses only the information from $u_i$.  To estimate the support of the kernel, we extract the additional information from $\{f_i\}$ as follows. We set the data-adaptive support of the kernel to be $[0,R]$ with $R$ defined by 
 \begin{equation}\label{eq:suppKernel}
R= 1.1 \min\{R_\rho, \max \{  |L^f_i - L^u_i|,|R^f_i - R^u_i|\}_{i=1}^N \}, 
 \end{equation}
where $(L^u_i, R^u_i )$ and $(L^f_i, R^f_i )$ are the lower and upper bounds of the supports $\supp(u_i)$ and $\supp(f_i)$ respectively, and $R_\rho$ is the maximum of the support of $\rho_N^J$. 
That is, the support of the kernel lies inside the support of the exploration measure, and it is the maximal interaction range indicated by the difference between supports of $u_i$ and $f_i$, which extracts the additional information in the data $\{f_i\}$. Here the multiplicative factor 1.1 is an artificial factor to enlarge the range, so that the supports of the basis functions will fully cover the explored region. To avoid unbounded support in the data-based estimation in \eqref{eq:suppKernel}, in numerical experiments we set a threshold to be $10^{-8}$ when estimating supports of $u_i$, $f_i$ and $\rho_N^J$. This truncation narrows the interaction range.

The estimated support of the kernel is the region explored by data. Outside of the region, the data provides little information about the kernel. Thus, we focus on learning the kernel in this region and set the local basis functions to be supported in it. Furthermore, we constrain the exploration measure to be supported in $[0,R]$. For simplicity of notation, we still denote it by $\rho_N^J$ or $\rho_N$. 

With the exploration measure and the support of the kernel, we select a class of  basis functions $\{\phi_i\}_{i=1}^n$ and a range of $n$ for the hypothesis space $\mH_n =\mathrm{span}\{\phi_i\}_{i=1}^n$. The basis function can be either global basis functions such as Bernstein polynomials as those used in  \cite{you2020_DatadrivenLearning,you2021_DatadrivenLearning} and trigonometric functions, or local basis functions such B-spline polynomials (see Appendix \ref{sec:Bsplines} for a brief introduction). We focus on local basis functions because they are more flexible to adaptive to local structure of the kernel. To set the range for $n$, we note that the mesh points of the kernel's independent variable explored by data are $\{k\Delta x:  k=1,\ldots, \floor{\frac{R}{\Delta x}}\}$. Meanwhile, the basis function should be linearly independent in $L^2(\rho_N^J)$ so that the basis matrix  
\begin{equation}\label{eq:Bmat}
B_n = ( \innerp{\phi_i,\phi_j}_{L^2(\rho_N^J)})_{1\leq i,j\leq n} \in \R^{n\times n}
\end{equation}
is non-singular. Thus, we set the range of $n$ to be in $\floor{\frac{R}{\Delta x}}\times [0.2,1]$ such that $B_n$ is non-singular while covering a wide range of dimensions. For example, when we use piecewise constant basis, we can set $n=\floor{\frac{R}{\Delta x}}$, and we get $B_n=\mathrm{Diag}(\rho_N^J)$. Thus, we estimate the kernel as a vector of its values on the mesh points, with $L^2(\rho_N^J)$ being a vector space with a discrete-measure $\rho_N^J$.

\paragraph{Step 2: Assemble regression matrices and vectors.} We assemble the regression matrix $\Abar_n$ and vector $\bbar_n$, as defined in \eqref{eq:Ab}, for each hypothesis spaces $\mH_n =\mathrm{span}\{\phi_i\}_{i=1}^n$. Together with the basis matrix $B_n$ in \eqref{eq:Bmat}, the triplet $(\Abar_n, \bbar_n, B_n)$ is all we need for regression with SIDA-RKHS regularization in the next step. 
 
To avoid repeated reading of data, we extract the regression data that can be used for all hypothesis spaces by utilizing the regression structure, which requires reading the data only once. Note that to compute $\Abar_n(i,j) = \dbinnerp{\phi_i,\phi_j}$ for any pair of basis functions, with the bilinear form defined in \eqref{eq:binlinearForm_int}, we only need $G$ defined in \eqref{eq:G}. 
We note that when $d=1$, the integral $\int_{|\eta | =1} g(\eta)d\eta = g(\eta) + g(-\eta)$, therefore, we have 
\begin{align}\label{eq:G1D}
G(r,s) = \frac{1}{N}\sum_{i=1}^N \int  [u_i(x+r) +u_i(x-r) -2 u_i(x) ] [u_i(x+s)+u_i(x-s) - 2u_i(x) ] dx 
\end{align}
for $r,s\in \mathrm{supp}(\rho_N)$. Similarly, for a basis function $\phi_i$, to compute $\bbar(i)$ in \eqref{eq:Ab}, which can be re-written as  
$\bbar_n(i)=  \frac{1}{N}\sum_{k=1}^N \int  L_{\phi_i}[u_k](x) f_k(x)dx  = \int_0^R \phi_i(r) g_N^f(r) dr
$, 
we only need the function $g_N^f$ defined by 
\begin{equation}\label{eq:g_N^f}
g_N^f(r) =  \frac{1}{N}\sum_{i=1}^N \int_\Omega\int_{|\xi|=1} [u_i(x+r\xi) - u_i(x) ] f_i(x)d\xi\,dx. 
\end{equation}
Let $r_k= k\Delta x$ for $k = 1, \ldots,  \floor{\frac{R}{\Delta x}}$, which are all the mesh points the data explore. Then, all the regression data  we need  in the original data \eqref{eq:data} are 
\begin{equation}\label{eq:regressionData}
 \left\{G(r_k,r_l), g_N^f(r_k), \rho_N^J(r_k), \text{ with } k,l = 1, \ldots,  \floor{\frac{R}{\Delta x}}\right\},
\end{equation}
where $G$, $g_N^f$ and $\rho_N^J$ are defined respectively in \eqref{eq:G}, \eqref{eq:g_N^f} and \eqref{eq:rho_disc}.  

With these regression data, the triplet $(\Abar_n, \bbar_n, B_n)$ can be efficiently evaluated for any basis functions using a numerical integrator to approximate the corresponding integrals. For example, with Riemann sum approximation, we compute the normal matrix $\Abar_n$ and vector $\bbar_n$ and the basis matrix $B_n$ as   
\begin{equation}\label{eq:AbB}
\begin{aligned}
\Abar_n(i,j) & = \dbinnerp{\phi_i,\phi_j} \approx \sum_{k,l} \phi_i(r_k)\phi_j(r_l) G(r_k,r_l)) \Delta x^2, \\
\bbar_n(i)  & \approx \sum_{k} \phi_i(r_k)g_N^f(r_k)) \Delta x, \\
B_n(i,j) & \approx \sum_{k} 
\phi_i(r_k)\phi_j(r_k) \rho_N^J(r_k) \Delta x.  
\end{aligned}
\end{equation}
\paragraph{Step 3: Regress with SIDA-RKHS regularization.} Our SIDA-RKHS regularization method uses the norm of the SIDA-RKHS so as to ensure the learning to take space in the function space of identifiability as discussed in Section \ref{sec:Identifiability}. That is, our estimator is the minimizer of the regularized loss in \eqref{eq:lossFn_reg} with the regularization norm $\mathcal{R}(\phi) = \|\phi\|^2_{H_G}$ defined in \eqref{eq:norms}.   

\noindent\emph{Computation of the RKHS norm}. We can effectively approximate the RKHS norm $\|\phi\|^2_{H_G}$ using the triplet $(\Abar_n, \bbar_n, B_n)$. It proceeds in two steps. First, we solve the generalized eigenvalue problem $\Abar_n V =  B_n  V\Lambda$, where $\Lambda$ is a diagonal matrix of the generalized eigenvalues and the matrix $V$ has columns being eigenvectors orthonormal in the sense that $V^\top B_n V = I_n$. Here these eigenvalues approximate the eigenvalue of $\LGbar$ in \eqref{eq:LG}, and $\widehat\psi_k = V_{jk}\phi_j$ approximates the eigenfunctions of $\LGbar$. Then, we compute the square RKHS norm of $\phi= \sum_i c_i \phi_i$ as
\begin{equation}\label{eq:rkhsNormMat}
\|\phi\|^2_{H_G} = c^\top B_{rkhs} c, \, \text{ with } B_{rkhs} = (V\Lambda V^\top)^{-1}, 
\end{equation}
 where the inverse is taken as pseudo-inverse, particularly when $\Lambda$ has zero eigenvalues. 
 
With the RKHS-norm ready, we write the regularized loss for each function $\phi= \sum_i c_i \phi_i$ as 
$
\calE_\lambda(\phi) = c^\top (\Abar_n + \lambda B_{rkhs}) c - 2 c^\top \bbar_n + C_N^f. 
$
The regularized estimator is 
\begin{align}\label{eq:c_reg}
\widehat{\phi_\lambda} = \sum_{i = 1} ^ n c^i_\lambda \phi_i, \ c_\lambda = (\Abar_n + \lambda B_{rkhs})^{-1}\bbar_n. 
\end{align}

We will select the hyper-parameter that balances the loss $\calE$ and the regularization term by the widely-used L-curve method \cite{hansen_LcurveIts_a}. It identifies the optimal hyper-parameter as the maximizer of the curvature of the curve (see Section \ref{sec:append_Lcurve}).  

\section{B-spline basis functions and the L-curve method}
\vspace{-1mm}
\subsection{B-spline basis functions}\label{sec:Bsplines} 
B-spline is a class of piecewise polynomials, and is capable of representing the local information of the target function. Here we review briefly the recurrence definition and properties of the balanced B-splines, for more details we refer to the Chapter 2 of \cite{piegl1997_NURBSBook} and \cite{lyche2018_FoundationsSpline}.    

Given a non-decreasing sequence of real numbers $\{r_0,r_1,\ldots, r_m\}$ (called knots), the B-spline basis functions of degree $p$, denoted by $\{N_{i,p} \}_{i=0}^{m-p}$, is defined recursively as
\begin{equation}\label{eq:B_spline}
\begin{aligned}
  &N_{i,0}(r) = 
  \left\{
    \begin{array}{lr}
      1,\ &r_i \leq r < r_{i+1},\\
      0,\ &otherwise,
    \end{array}
  \right.\\
  &N_{i,p}(r) = \frac{r - r_i}{r_{i + p} - r_i} N_{i, p-1}(r) + \frac{r_{i+p+1} - r}{r_{i + p + 1} - r_{i + 1}}N_{i + 1, p- 1}(r).
\end{aligned}
\end{equation}
The B-spline basis has the following properties: 
\begin{itemize}[leftmargin=*]\setlength\itemsep{-1mm}
\item Each function $N_{i,p}$ is a nonnegative local polynomial of degree $p$, supported on $[r_i,r_{i+p+1}]$;
\item At a knot with multiplicity $k$, it is $p-k$ times continuously differentiable. Hence, the smoothness increases with the degree but decreases when the knot multiplicity increases; 
\item The basis satisfies partition unity: for each $r\in [r_i,r_{i+1}]$, $\sum_{j} N_{j,p}(r)  = \sum_{j=i-p}^i N_{j,p}(r) =1$. 
\end{itemize}

 We set the knots to be a uniform partition of the support of $\rhoT$, $[R_{min}, R_{max}]$,  
\[  R_{min} = r_{0} \leq r_1 \leq \cdots \leq r_{m}= R_{min}.\]
We set the basis functions of the hypothesis $\mH $, whose dimension is $n=m-p$, to be 
$$\phi_i(r) = N_{i, p}(r), \ i = 1,\dots, m-p. $$ 
Thus, the basis functions $\{\phi_i\}$ are piecewise degree-$p$ polynomials with knots adaptive to $\rhoT$. 

\vspace{-1mm}
\subsection{Hyper-parameter selection by the L-curve method}\label{sec:append_Lcurve}
We select the parameter $\lambda$ by the L-curve method \cite{hansen_LcurveIts_a,LangLu22}. 
Let $l$ be a parametrized curve in $\R^2$: 
$$ l(\lambda) = (x(\lambda), y(\lambda)) := (\text{log}(\calE(\widehat{\phi_\lambda}), \text{log}(\mathcal{R} (\widehat{\phi_\lambda} )), $$
where $\calE(\widehat{\phi_\lambda}) = c_\lambda^\top \Abar_n c_\lambda - 2c_\lambda^\top \bbar_n-C_N^f$, and $ \mathcal{R} (\phi)$ is the regularization term, for example, $\mathcal{R} (\widehat{\phi_\lambda} ) =  \| \widehat{\phi_\lambda}\|_{H_\Gbar}^2 = c_\lambda^\top B_{rkhs} c_\lambda$.
The optimal parameter is the maximizer of the curvature of $l$. In practice, we restrict $\lambda$ in the spectral range $[\lambda_{min},\lambda_{max}]$ of the operator $\LGbar$, 
\begin{align}\label{eq:opt_lambda}
	\lambda_{0} 
	= \argmax{\lambda_{\text{min}} \leq \lambda \leq \lambda_{\text{max}}}\kappa(l(\lambda)) 
	= \argmax{\lambda_{\text{min}} \leq \lambda \leq \lambda_{\text{max}}}
	\frac{x'y'' - x' y''}{(x'\,^2 + y'\,^2)^{3/2}},
\end{align}
where $\lambda_{min}$ and $\lambda_{max}$ are computed from the smallest and the largest generalized eigenvalues of $(\Abar_n,B_n)$. 
This optimal parameter $\lambda_{0}$ balances the loss $\calE$ and the regularization (see \cite{hansen_LcurveIts_a} for more details).
In practice, instead of computing the second order derivatives, we compute the curvature by the reciprocal of the radius of the interior circle of three consecutive points\footnote{Are Mjaavatten (2022). Curvature of a 1D curve in a 2D or 3D space (\url{https://www.mathworks.com/matlabcentral/fileexchange/69452-curvature-of-a-1d-curve-in-a-2d-or-3d-space}), MATLAB Central File Exchange.}.

\section{Additional numerical results for synthetic data examples}\label{sec:additionResults}
This section provides additional numerical results for the examples with synthetic data. 

Figure \ref{fig:convftn} shows that the SIDA-RKHS regularizer leads to converging estimators in all three examples for both noisy and noiseless data, whereas the $l^2$-norm and the $L^2$-norm regularizers' estimators have slow convergent rates or even no convergence when the data is noisy. 

\begin{figure}[h!]
    \centering 	  \hspace{-4.5mm}
\includegraphics[width =0.65\textwidth]{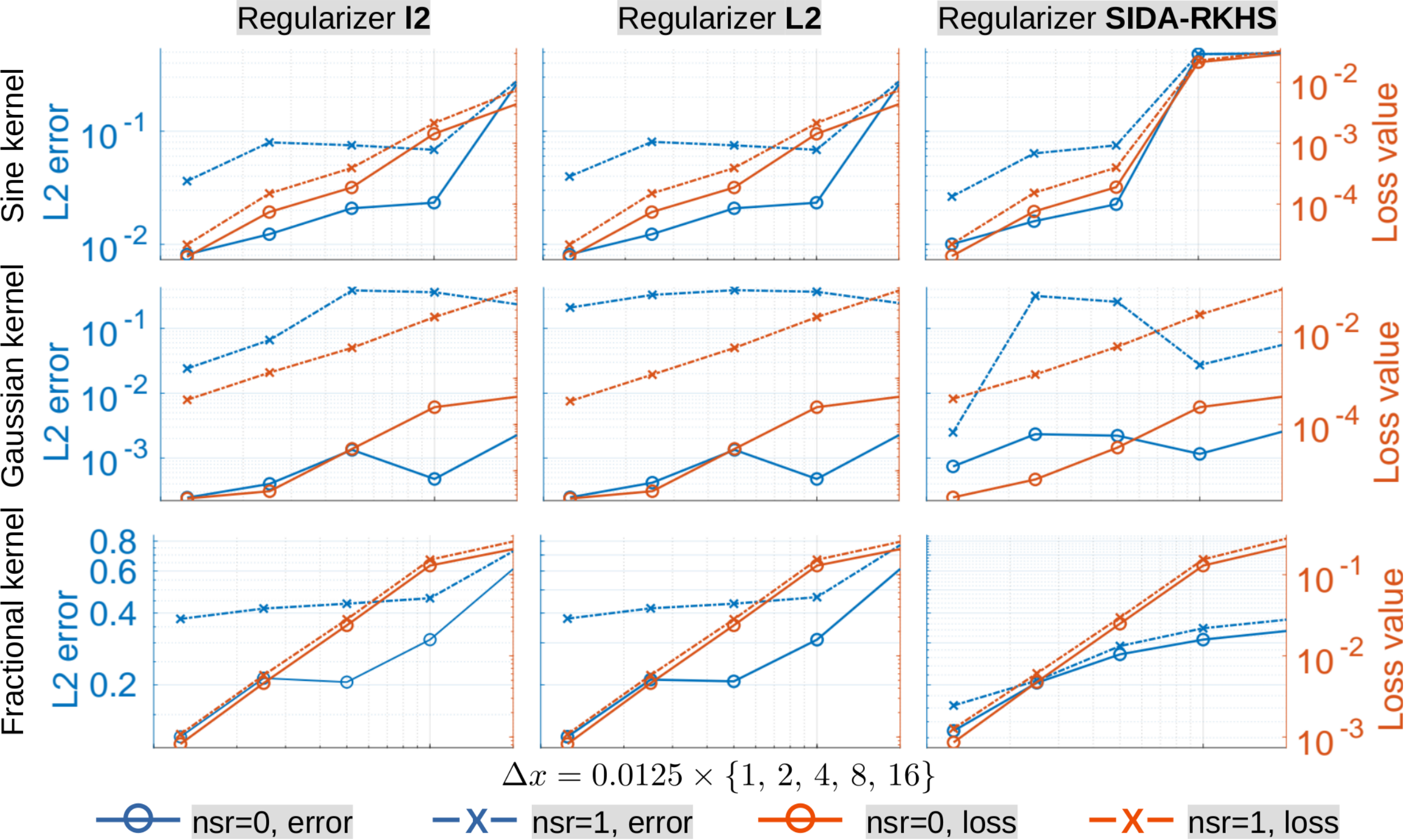} 
\vspace{-2mm}
\caption{Convergence of function estimators as the data mesh-size $\Delta x$ refines, along with values of the loss function. The SIDA-RKHS regularizer consistently converges for both noiseless and noisy data, with better rates (slope) than the other two regularizers for noisy data. Note that for the fractional kernel, it has a lower rate though being more accurate. 
} \label{fig:convftn}		\vspace{-3mm}
\end{figure}

We note that the performance of these regularizers depends on the optimal regularization strength $\lambda_0$, which is selected by the L-curve method introduced in Section \ref{sec:append_Lcurve}. In our tests, all regularizers can successfully select the optimal $\lambda_0$ for most of the time, and the SIDA-RKHS regularizer has the most well-shaped L-curve, which leads to the most robust regularization (see Figure \ref{fig:Lcurve} for typical L-curve plots).    
\begin{figure}[h!]
    \centering 	  \hspace{-4.5mm}
\includegraphics[width =1.0\textwidth]{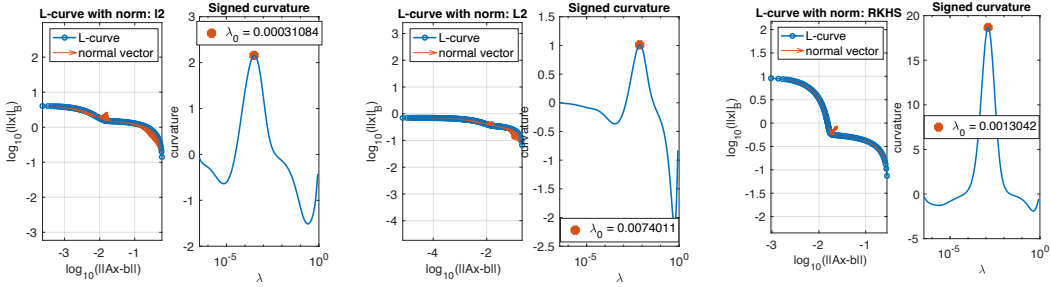} 
\vspace{-2mm}
\caption{Typical L-curve plots for the selection of the optimal regularization parameter $\lambda_0$ for the Gaussian kernel with $\Delta x=0.05$ and $nsr=1$. From left to right: the $l^2$, $L^2$ and SIDA-RKHS regularizers. All regularizers successfully select the optimal $\lambda_0$, and the SIDA-RKHS regularizer has the most well-shaped L-curve.   
} \label{fig:Lcurve}		\vspace{-3mm}
\end{figure}

\section{Detailed Real-world Dataset Experiment Settings}\label{sec:barsetting}

In this section we provide further experiment details for the real-world dataset studied in  \ref{sec:metamaterial}.

For both training and validation purposes we generate data using high-fidelity (HF) simulations for the propagation of stress waves within the microstructure of the heterogeneous, linear elastic bar. In the following, we use $\hat{u}$ to denote the HF solution, to distinguish the HF dataset from the homogenized solution of \eqref{eq: NLmodel}. The HF-model is a classical wave equation: the displacement $\hat{u}(x, t)$ satisfies, for $(x, t) \in \Omega \times [0, T]$ with $\Omega \subset \R$, 
\begin{align}\label{eq: HF model}
   \partial_{tt} \hat{u}(x,t)-L_{HF}[\hat{u}](x,t)=g(x,t), 
\end{align}
with a force loading term $g(x,t)$, proper boundary conditions and initial conditions $\hat{u}(x,0)=0$, $\partial_t \hat{u}(x,0)=0$. 
Considering the heterogeneous bar of two materials depicted in Figure \ref{fig:bicrack_pa}, \eqref{eq: HF model} describes the stress wave propagating with speed $c_1=\sqrt{E_1/\rho}$ in material 1 and speed $c_2=\sqrt{E_2/\rho}$ in  material 2. We solve the HF-model \eqref{eq: HF model} by the direct numerical solver (DNS) introduced in \cite{silling2020_PropagationStressPulse}. 
The DNS employs the characteristic line method, which provides exact solutions of velocities. For each grid point $x_j\in\Omega$ at time step $t^n=n\Delta t$, where $\Delta t$ is the time step size, with the calculated exact velocity $\hat{v}(x_j,t^n)$ and the estimated displacement from the last time step $\hat{u}(x_j,t^{n-1})$ we update the HF displacement by
$$\hat{u}(x_j,t^{n})=\hat{u}(x_j,t^{n-1})+\Delta t \hat{v}(x_j,t^{n}).$$
With the above procedure, we then consider various boundary velocity loading $\partial_t \hat{u}_i(x,t)$, $x\in\partial\Omega$, and force loading $g_i(x,t)$ scenarios, and solve for the corresponding HF displacement field $\hat{u}_i(x,t)$. Resultant data pairs $\{\hat{u}_i,g_i\}_{i=1}^N = \{\hat{u}_i(x_j,t^n),g_i(x_j,t^n): j=1,\ldots,J\}_{i=1,n=0}^{N,T/\Delta t}$ are employed as the training and validation datasets. Discretization parameters for the DNS solver are set to $\Delta t= 0.01$ and $\max{\Delta x}= 0.01$.

The homogenization problem is then to learn the kernel of the nonlocal operator $L_\phi$ that approximates the operator $L_{HF}$ from data $\{\hat{u},f\}$ generated by $L_{HF}[\hat{u}]=f$, where $f=\partial_{tt}\hat{u}-g$. Discretizing the time derivative in \eqref{eq: NLmodel} with the central difference scheme, we obtain
$$\dfrac{1}{\Delta t^2}(\hat{u}^{n+1}(x)-2\hat{u}^{n}(x)+\hat{u}^{n-1}(x))-g(x,t^n):=f^n(x),$$
where $\hat{u}^n(\cdot):=\hat{u}(\cdot,t^n)$ denotes the solution at time $t^n$. Given $\mathcal{D}=\{\hat{u}_i^n(x),f_i^n(x)\}_{i=1,n=1}^{N,T/\Delta t}$, our goal is to learn the kernel $\phi$. 
The loss functional is 
\begin{align}\label{eq: loss function meta}
    \calE(\phi) & = \frac{\Delta t}{NT}\sum_{k=1}^N\sum_{n=1}^{T/\Delta t}\| L_\phi[\hat{u}^n_k] -f^n_k\|^2_{L^2(\Omega)}. 
\end{align}

\subsection{Settings on real-world data}
In the learning problem, we consider four types of data and use the first three for training and the last one for validation of our algorithm. For all data we set $L = 0.2$, $\Delta t=0.02$, and the symmetric domain $\Omega = [-b, b]$. Two spatial resolutions, $\Delta=0.05$ and $\Delta=0.025$ are considered, which we denote as the ``coarse'' and ``fine'' datasets, respectively.
\begin{enumerate}\setlength{\itemindent}{5mm}
    \item[Type 1] \textit{Oscillating source (20 samples in total).} $b = 50$, $T=2$, $g(x,t) = \exp^{-(\frac{2x}{5jL})^2}\exp^{-(\frac{t-0.8}{0.8})^2}\cos^2(\frac{2\pi x}{jL})$, where $j=1,2,\cdots,20$. 
    \item[Type 2] \textit{Plane wave with $\cos$ loading (11 samples in total).} $b = 50$, $T=2$, $g(x,t) = 0$ and $\partial_t {u}(-50,t) = \cos(j t)$, where the loading frequency $j = 0.35,0.70,\cdots, 3.85$.
    \item[Type 3] \textit{Plane wave with $\sin$ loading (11 samples in total).} $b = 50$, $T=2$, $g(x,t) = 0$ and $\partial_t {u}(-50,t) = \sin(j t)$, where the loading frequency $j = 0.35,0.70,\cdots, 3.85$.
    \item[Type 4] \textit{Wave packet (3 samples in total).}   $b = 133.3$, $T=100$, $g(x,t) = 0$ and $\partial_t {u}(-b, t) = \sin(j t) \exp\left(-(t/5 -3)^2\right)$, for $j =1,\, 2, \,3$.
\end{enumerate}
Notice that the validation dataset (Type 4 dataset) is under a different loading condition from the training dataset, and with a much longer simulation time.

\begin{ack}
YY are supported by the National Science Foundation under award DMS 1753031, and the AFOSR grant FA9550-22-1-0197. YY would also like to like to thank Dr. Stewart Silling for his help on the DNS codes and for valuable discussions.  FL is grateful for supports from  NSF-1913243 and FA9550-20-1-0288. FL and QA would like to thank Quanjun Lang for helpful discussions on regularization.  


\end{ack}


\medskip

 \bibliographystyle{plain}
 {\small
\bibliography{ref_levy,ref_inverseP,ref_kernel_learning,ref_FeiLU2022_1,ref_prop,ref_regularization,ref_nonlocal_kernel,ref_sparseRegression,ref_operator}
}
\end{document}